\newcommand{\algnamep}{\texttt{\upshape SparseLinUCB}}
\newcommand{\oful}{\texttt{\upshape OFUL}}
\newcommand{\myexp}{\texttt{\upshape Exp3}}
\newcommand{\algname}{\texttt{\upshape AdaLinUCB}}
\newcommand{\seqsew}{\texttt{\upshape SeqSEW}}
\newcommand{\RegE}{{\rm{Reg}_{\rm{Exp3}}}}
\newcommand{\RegU}{{\rm{Reg}_{\rm{Img}}}}
\renewcommand{\hat}{\widehat}
\renewcommand{\tilde}{\widetilde}
\renewcommand{\epsilon}{\varepsilon}
\renewcommand{\ind}{\mathbb{1}}
\newcommand{\cmark}{\ding{51}}
\newcommand{\xmark}{\ding{55}}
\title{Sparsity-Agnostic Linear Bandits \\ with Adaptive Adversaries}
\author{
  Tianyuan Jin \\
School of Computing\\
National University of Singapore\\
Singapore 117417\\
  \texttt{tianyuan@u.nus.edu} \\
\And
  Kyoungseok Jang\\
Dipartimento di Informatica\\ 
Università degli Studi di Milano\\
Milan, MI 20133\\
\texttt{ksajks@gmail.com} \\
\And
  Nicolò Cesa-Bianchi\\
Università degli Studi di Milano\\
Politecnico di Milano\\
Milan, MI 20133\\
\texttt{nicolo.cesa-bianchi@unimi.it} \\
}
\begin{document}

\maketitle

\begin{abstract}
We study stochastic linear bandits where, in each round, the learner receives a set of actions (i.e., feature vectors), from which it chooses an element and obtains a stochastic reward. The expected reward is a fixed but unknown linear function of the chosen action. We study \emph{sparse} regret bounds, that depend on the number $S$ of non-zero coefficients in the linear reward function. Previous works focused on the case where $S$ is known, or the action sets satisfy additional assumptions. In this work, we obtain the first sparse regret bounds that hold when $S$ is unknown and the action sets are adversarially generated. Our techniques combine online to confidence set conversions with a novel randomized model selection approach over a hierarchy of nested confidence sets. When $S$ is known, our analysis recovers state-of-the-art bounds for adversarial action sets. We also show that a variant of our approach, using Exp3 to dynamically select the confidence sets, can be used to improve the empirical performance of stochastic linear bandits while enjoying a regret bound with optimal dependence on the time horizon.
\end{abstract}

\section{Introduction}
$K$-armed bandits are a basic model of sequential decision-making in which a learner sequentially chooses which arm to pull in a set of $K$ arms. After each pull, the learner only observes the reward returned by the chosen arm. After $T$ pulls, the learner must obtain a total reward as close as possible to the reward obtained by always pulling the overall best arm. Linear bandits extend $K$-armed bandits to a setting in which arms belong to a $d$-dimensional feature space. In each round $t$ of a linear bandit problem, the learner receives an action set $\cA_t\subset\RR^d$ from the environment, chooses an arm $A_{t}\in \cA_{t}$ based on the past observations, and then receives a reward $X_t$. In this work, we consider the stochastic setting in which rewards are defined by $X_t = \langle \theta_{*}, A_{t} \rangle+\epsilon_{t}$, where $\theta_*\in \RR^{d}$ is a fixed latent parameter and $\epsilon_{t}$ is zero-mean independent noise. In linear bandits, the learner's goal is to minimize the difference between the total reward obtained by pulling in each round $t$ the arm $a\in\cA_t$ maximizing $\langle \theta_{*}, a \rangle$ and the total reward obtained by the learner.

In stochastic linear bandits, the regret after $T$ rounds is known to be of order $d\sqrt{T}$ up to logarithmic factors. The linear dependence on the number $d$ of features implies that the learner is better off by ignoring features corresponding to negligible components of the latent target vector $\theta_*$. Hence, one would like to design algorithms that depend on the number $S \ll d$ of relevant features without requiring any preliminary knowledge on $\theta_*$. This is captured by the setting of sparse linear bandits, where $\theta_*$ is assumed to have only $0 < S \le d$ nonzero components. 

In the sparse setting, \citet[Section~24.3]{lattimore2018bandit} show that a regret of $\Omega\big(\sqrt{SdT}\big)$ is unavoidable for any algorithm, even with knowledge of $S$. When $S$ is known, this lower bound is matched (up to log factors) by an algorithm of \citet{abbasi2012online} who, under the same assumptions and for the same algorithm, also prove an instance-dependent regret bound of $\tilde{O}\big(\frac{Sd}{\Delta}\big)$. Here $\Delta$ is the minimum gap, over all $T$ rounds, between the expected reward of the optimal arm and that of any suboptimal arm. In this work we focus on the sparsity-agnostic setting, i.e., when $S$ is unknown. Fewer results are known for this case, and all of them rely on additional assumptions on the action set, or assumptions on the sparsity structure. For example, if the action set is stochastically generated, \citet{oh2021sparsity} prove a $\tilde{O}\big(S\sqrt{T}\big)$ sparsity-agnostic regret bound. More recently, \citet{dai2023variance} showed a sparsity-agnostic bound $\tilde{O}\big(S^2\sqrt{T} + S\sqrt{dT}\big)$ when the action set is fixed and equal to the unit sphere. In a model selection setting, \citet{cutkosky2021dynamic} prove a $\tilde{O}\big(S^2\sqrt{T}\big)$ sparsity-agnostic regret bound for adversarial action sets, but under an additional nestedness assumption: $(\theta_{*})_i \neq 0$ for $i=1,\ldots,S$. Surprisingly, no bounds improving on the $\tilde{O}\big(d\sqrt{T}\big)$ regret of the \oful\ algorithm \citep{abbasi2011improved} in the sparsity-agnostic case are known that avoid additional assumptions on the sparsity structure or on the action set generation.

\textbf{Main contributions.} Here is the summary of our main contributions. All the proofs of our results can be found in the appendix.
\begin{itemize}[parsep=1pt,wide]
    \item We introduce a randomized sparsity-agnostic linear bandit algorithm, \algnamep, achieving regret $\tilde{O}\big(S\sqrt{dT}\big)$ with no assumptions on the sparsity structure (e.g., nestedness) or on the action set (which may be controlled by an adaptive adversary). When $S = o(\sqrt{d})$, our bound is strictly better than the \oful\ bound $\tilde{O}\big(d\sqrt{T}\big)$.
    \item Our analysis of \algnamep\ simultaneously guarantees an instance-dependent regret bound $\tilde{O}\big(\max\{d^2,S^2 d\}/\Delta\big)$, where $\Delta$ is the smallest suboptimality gap over the $T$ rounds.
    \item If the sparsity level is known, our algorithm recovers the optimal bound $\tilde{\Theta}(\sqrt{SdT})$.
    \item We also introduce \algname, a variant of \algnamep\ that uses \myexp\ to learn the probability distribution over a hierarchy of confidence sets
    in stochastic linear bandits. Unlike previous works, which only showed a $\tilde{O}\big(T^{2/3}\big)$ regret bound for similar approaches, \algname\ has a $\tilde{O}\big(\sqrt{T}\big)$ regret bound. In experiments on synthetic data, \algname\ performs better than \oful. 
\end{itemize}

\textbf{Technical challenges.}
Recall that the arm chosen in each round by \oful\ is 
\begin{align}
\label{eq:oful}
    A_t = \argmax_{a\in \cA_{t}} \langle a, \hat\theta_{t} \rangle + \sqrt{\gamma_t}\|a\|_{V_{t-1}^{-1}}
\end{align}
where $\hat{\theta}_t$ is the regularized least-squares estimate of $\theta_*$, $V_{t-1} = I + \sum_{s < t} A_s A_s^{\top}$ is the regularized covariance matrix of past actions, and $\sqrt{\gamma_t}$ is the radius of the confidence set
\begin{align}
\label{eq:ellipsoid}
    \left\{\theta\in \RR^{d}: \|\theta-\hat{\theta}_{t-1}\|^2_{V_{t-1}} \leq \gamma_t\right\}~.
\end{align}
The squared radius $\gamma_t = O(d\ln t)$ is such that $\theta_*$ belongs to~\eqref{eq:ellipsoid} with high probability simultaneously for all $t\ge 1$.
Our approach, instead, builds on the online to confidence set conversion technique of \citet{abbasi2012online}, where they show how to design a different confidence set for $\theta_*$ based on the predictions of an arbitrary algorithm for online linear regression, such that the squared radius of the confidence set is roughly equal to the regret bound of the algorithm. Using the algorithm \seqsew\ for sparse online linear regression \citep{gerchinovitz2011sparsity}, whose regret bound is $O(S\log T)$, they obtain the optimal regret $\tilde{O}\big(\sqrt{SdT}\big)$ for sparse linear bandits. Unfortunately, this result requires knowing $S$ to properly set the radius of the confidence set. Our strategy \algnamep\ (Algorithm~\ref{alg:expucb-prior}) bypasses this problem by running the online to confidence set conversion technique over a hierarchy of nested confidence sets with radii $\alpha_i = 2^i\log T$ for $i = 1,\ldots,n=\Theta(\log d)$. The framework of \citet{abbasi2012online} guarantees that, for any sparsity value $S \in [d]$, there is a critical radius $\alpha_o = O(S\log T)$ such that, with high probability, $\theta_*$ lies in the set with radius $\alpha_i$ for all $i \ge o$. \algnamep\ randomizes the choice of the index $i$ of the confidence radius $\alpha_i$, used for selecting the action at time $t$. If the random index $I_t$ is such that $I_t \ge o$, then we can bound the regret incurred at step $t$ using standard techniques \citep{abbasi2011improved,abbasi2012online}. By choosing $\PP(I_t = i)$ proportional to $2^{-i}$, we make sure that larger confidence sets (delivering suboptimal regret bounds) are chosen with exponentially small probability. If $I_t < o$, then $\theta_*$ is not guaranteed to lie in the confidence set of radius $\alpha_{I_t}$ with high probability. Our main technical contribution is to show that the regret summed over these bad rounds is bounded by $\tilde{O}\big(\sqrt{SdT/Q}\big)$, where $Q = \PP(I_t \ge o)$. The proof of this bound requires showing that the regret in a bad round $t$ (when $I_t < o$) can be bounded by $\sqrt{\alpha_{o}} \|A_{t}^o\|_{V_{t-1}^{-1}}$. Proving that $\|A_{t}^o\|_{V_{t-1}^{-1}}$ shrinks fast enough uses the fact that $\det V_t $ grows fast enough due to the exploration in the good rounds $t$ (when $I_t \ge o$). This is done by a carefully designed peeling technique that partitions $[T]$ in blocks based on the value of $\det V_t $.

To extend our analysis of \algnamep\ and obtain instance-dependent regret bound, we apply the techniques of \citet{abbasi2011improved} to show that the regret over the good rounds is bounded by $\tilde{O}\big(d^2/\Delta\big)$.
The regret over a bad round $t$ is controlled by $(\alpha_{o}/\Delta)\|A_{t}^{o}\|^2_{V_{t-1}^{-1}}$ and---using techniques similar to the instance-independent analysis---we bound the regret summed over all bad rounds with $\tilde{O}\big(Sd/(Q\Delta)\big)$.

Given that \algnamep\ uses a fixed probability of order $2^{-i}$ to choose its confidence radius $\alpha_i$, it is tempting to explore adaptive probability assignments, that increase the probability of a confidence set proportionally to the rewards obtained by the actions that were selected based on that set. Algorithm \algname\ (see Algorithm~\ref{alg:expucb}) is a variant of \algnamep\ using \myexp\ \citep{auer2002nonstochastic} to assign probabilities to confidence sets. The analysis of \algname\ combines---in a non-trivial way---the analysis of \myexp\ (including a forced exploration term $q$) with that of \algnamep. Although the resulting regret bound does not improve on \oful, our algorithm provides a new principled solution to the problem of tuning the radius in~\eqref{eq:oful}. Experiments show that \algnamep\ can perform better than \oful.

\begin{table*}[t]
\tiny
\caption{Comparison with other sparse linear bandit works.
$S \in [d]$ is the sparsity level and $\Delta$ is the suboptimality gap \eqref{eq:sub-gap}. The nested assumption refers to $(\theta_{*})_i \neq 0$ for $i=1,\ldots,S$. The minimum signal and the compatibility condition refer to assumptions on the distribution of the action set and on the smallest value of the non-zero elements in $\theta_*$. Smoothed adversary refers to adversarially selected action sets with added Gaussian noise. 
}
\centering
	\setlength{\tabcolsep}{0.3em} 
	\renewcommand{\arraystretch}{1.4}
\label{tbl:limited}
\begin{small}
\begin{tabular}{lcccc}
\toprule
  Reference & \makecell{Sparsity\\Agnostic}&\makecell{Adaptive\\ Adversary} & Regret & Assumptions \\
\midrule
  \citet{abbasi2011improved} & \cmark & \cmark & $\tilde{O}(\min\big\{d\sqrt{T},d^2/\Delta\big\})$ & - \\ \hline
  \citet{abbasi2012online} & \xmark & \cmark & $\tilde{O}(\min\{\sqrt{SdT}, dS/\Delta\big\})$ & - \\ \hline
 {\citet{pacchiano2020regret,pacchiano2020model}}  & \cmark & \xmark &  $\tilde{O}(S^2\sqrt{T})$ & Nested, i.i.d.~actions  \\ \hline
   {\citet{cutkosky2021dynamic}} & \cmark & \cmark &  $\tilde{O}(S^2\sqrt{T})$ & \shortstack{Nested}    \\ \hline  
    {\citet{pacchiano2022best}}  & \cmark & \cmark &  $\tilde{O}(S^2\sqrt{T})$ & \shortstack{Nested}   \\
         & \cmark & \xmark &  $\tilde{O}(S^2d^2/\Delta)$ & \shortstack{Nested, i.i.d. actions}   \\ \hline
  \citet{lattimore2015linear} & \xmark & \xmark & $\tilde{O}(S\sqrt{T})$ & \shortstack{Action set is hypercube \\ $\epsilon_t \in [-1,1]$} \\ \hline
  \citet{sivakumar2020structured} & \xmark & \cmark & $\tilde{O}(S\sqrt{T})$ & Smoothed adversary \\ \hline
  \citet{hao2020high} & \xmark & \xmark & $\tilde{O}(\sqrt{ST})$ & \shortstack{Actions set spans $\RR^d$ \\ Minimum signal} \\ \hline
  \citet{oh2021sparsity} & \cmark & \xmark & $\tilde{O}(S\sqrt{T})$ & Compatibility \\ \hline
  \citet{dai2023variance} & \cmark & \xmark & $\tilde{O}(S^2\sqrt{T} + S\sqrt{dT})$ & Action set is unit sphere \\ \hline
Lower bound \citep{lattimore2018bandit} & \xmark & \cmark & $\Omega\big(\sqrt{SdT}\big)$ & - \\ \hline
\textbf{This paper}   & \cmark &\cmark & $\tilde{O}\Big( \min\big\{S\sqrt{dT},\frac{1}{\Delta}\max\{d^2,S^2d\}\big\}\Big)$ & -\\
\textbf{This paper}   & \xmark & \cmark & $\tilde{O}\big(\sqrt{SdT}\big)$ & - \\
\bottomrule
\end{tabular}
\end{small}
\vspace{-4mm}
\end{table*} 

\subsection{Additional related work}
\textbf{Sparse linear bandits.} With the goal of obtaining sparsity-agnostic regret bounds, different types of assumptions on the action set have been considered in the past. Starting from the $\tilde{O}(S\sqrt{T})$ regret upper bound of \citep{lattimore2015linear}, where the action set is assumed fixed and equal to the hypercube, some works considered stochastic action sets and proved regret bounds depending on spectral parameters of the action distribution, such as the minimum eigenvalue of the covariance matrix \citep{dai2023variance,hao2020high,jang2022popart,li2022simple}. Others assumed a stochastic action set with strong properties, such as compatibility conditions or margin conditions \cite{ariu2022thresholded,chakraborty2023thompson,chen2024nearly,li2021regret,oh2021sparsity}. 
As far as we know, there has been no research on adaptive adversarial action sets after \citep{abbasi2012online}.

\textbf{Model selection.} Sparse linear bandits can be naturally viewed as a bandit model selection problem.
For example, \citet{ghosh2021problem} establish a regret bound of $\tilde{O}(\sqrt{ST} + d^2/\alpha^{4.65})$ for a fixed action set, where $\alpha$ is the minimum absolute value of the nonzero components of $\theta_*$.
Quite a bit of work has been devoted to sparse regret bounds in the nested setting.
With i.i.d.\ and fixed-size actions sets, \citet{foster2019model} achieve a regret bound of order $\tilde{O}\big(S^{1/3} T^{2/3}/\gamma^3\big)$ in the nested setting, where $\gamma$ is the smallest eigenvalue of the covariance matrix of $\cA_t$.
Under the same assumption on the action set, \citet{pacchiano2020model,pacchiano2020regret} obtain a regret bound of $\tilde{O}(S^2 \sqrt{T})$.
For adversarial action sets, \citet{cutkosky2021dynamic} obtain $\tilde{O}(S^2\sqrt{T})$ in the nested setting.
When actions are sampled i.i.d., \citet{cutkosky2021dynamic} and \citet{pacchiano2022best} obtain a regret bound of $\tilde{O}(S^2 \sqrt{T})$ for nested settings. They also obtain simultaneous instance-dependent bounds, in particular,  \citet{pacchiano2022best} achieve $\tilde{O}\big((Sd)^2 / \Delta\big)$.
Compared to the instance-dependent regret bound, our results are more general, as we allow the action set to be adaptively chosen by an adversary and do not require the nested assumption. 

\textbf{Parameter tuning.} Although the theoretical anlysis of \oful\ only holds for $\gamma_t=O(d\log t)$, smaller choices of the radius in~\eqref{eq:ellipsoid} are known to perform better in practice.
Our design of \algnamep\ and \algname\ borrows ideas from the parameter tuning setting, which is typically addressed using a set of base algorithms and a randomized master algorithm that adaptively changes the probability of selecting each base algorithm \citep{marinov2021pareto,pacchiano2020regret,pacchiano2020model}. 
In particular, \algname\ builds on \citep{ding2022syndicated}, where they show that running \myexp\ as the master algorithm over instances of \oful\ with different radii has a better empirical performance than Thompson Sampling and UCB. Yet, they only show a regret bound of $\tilde{O}\big(T^{2/3}\big)$ when the action set is drawn i.i.d.\ in each round (they also prove a bound of order $\sqrt{T}$, but only under additional assumptions on the best model). This is consistent with the results of \citet{pacchiano2020model}, who also obtained a regret of the same order using \myexp\ as master algorithm.

\section{Problem definition}
In linear bandits, a learner and an adversary interact over $T$ rounds.
In each round $t=1,\ldots,T$:
\begin{enumerate}[wide,parsep=1pt]
\item The adversary chooses an arm set $\cA_t \subset \RR^{d}$;
\item The learner choose an arm $A_t\in \cA_t$;
\item The learner obtains a reward $X_t$.
\end{enumerate}
We assume the adversary is adaptive, i.e., $\cA_t$ can depend in an arbitrary way on the (possibly randomized) past choices of the learner. The reward in each round $t$ satisfies
\begin{equation}
\label{eq:linmodel}
    X_t=\left\langle A_t, \theta_* \right\rangle+\epsilon_t~.
\end{equation}
Here $\theta_*\in\RR^d$ is a fixed and unknown target vector and $\{\epsilon_t\}_{t\in [T]}$ are independent conditionally $1$-subgaussian random variables.

We also assume $\|\theta_*\|_2 \leq 1$ and $\| a \|_2 \leq 1$ for all $a \in \cA_t$ and all $t\in [T]$.\footnote{The choice of the constant $1$ is arbitrary. Choosing different constants would scale our bounds similarly to the scaling of the bounds in \citep{abbasi2012online}.}

The regret of a strategy over $T$ rounds is defined as the difference between the reward obtained by the optimal policy, always choosing the best arm in $\cA_t$, and the reward obtained by the strategy choosing arm $A_t\in\cA_t$ for $t\in [T]$,
\begin{align*}
    R_{T}=\sum_{t=1}^{T}\max_{a\in \cA_{t}}\langle a, \theta_*  \rangle-\sum_{t=1}^{T}\langle A_{t}, \theta_*  \rangle.
\end{align*}
In the sparse setting, we would like to devise strategies whose regret depends on
\begin{align*}
 S = \| \theta_* \|_0=\sum_{i=1}^{d} \ind\{\theta_i\neq 0\}
\end{align*}
corresponding to the number of nonzero components of $\theta_*$.

\subsection{Online to confidence set conversions}
Establishing a confidence set including the target vector with high probability is at the core of linear bandit algorithms, and our approach for designing a sparsity-agnostic algorithm is based on a result by \citet{abbasi2012online}. They show how to construct a confidence set for \oful~\cite{abbasi2011improved} based on the predictions of a generic algorithm for online linear regression, a sequential decision-making setting defined as follows. For $t=1,\ldots,T$:
\begin{enumerate}[wide,parsep=1pt]
\item The adversary privately chooses input $A_t\in \RR^{d}$ and outcome $X_t\in \RR$;
\item The learner observes $A_{t}$ and chooses prediction $\hat{X}_t \in \RR$;
\item The adversary reveals $X_t$ and the learner suffers loss $(\hat{X}_t-X_t)^2$.
\end{enumerate}
The learner's goal in online linear regression is to minimize the following notion of regret against any comparator $\theta\in\RR^d$
\begin{align*}
   \rho_{T}(\theta)= \sum_{t=1}^{T}(X_t-\hat{X}_t)^{2}-\sum_{t=1}^{T}(X_t-\left \langle A_t, \theta \right\rangle)^2.
\end{align*}
The confidence set proposed by \citet{abbasi2012online} is established by the following result.
\begin{lemma}[\protect{\citet[Corollary~2]{abbasi2012online}}]
\label{lem:self-con-bound}
Let $\delta\in(0,1/4]$ and $\|\theta_*\|_2\leq 1$. Assume a sequence $\big\{(A_t,X_t)\}_{t\in [T]}$, where $X_t$ satisfies~\eqref{eq:linmodel} for all $t \in [T]$, is fed to an online linear regression algorithm $\mathcal{B}$ generating predictions $\big\{\hat{X}_t\}_{t\in [T]}$. 
Then $\PP\big(\exists t\in [T] \,:\, \theta_*\notin \cC_t\big)\leq \delta$, where 
\begin{align}
\label{eq:condifence-int}
    \cC_{t}
&=
    \left\{\theta\in \RR^{d}: \|\theta\|^2_2+\sum_{s=1}^{t-1}(\hat{X}_s-\langle A_s, \theta \rangle)^2 \leq \gamma(\delta)\right\}
\\ \nonumber
    \gamma(\delta)
&=
    2 + 2B_T + 32\log \left( \frac{\sqrt{8}+\sqrt{1+B_T}}{\delta} \right)
\end{align}
and $B_T$ is an upper bound on the regret $\rho_T(\theta_*)$ of $\mathcal{B}$.

\end{lemma}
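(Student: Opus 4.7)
The plan is to show that $\theta_*$ satisfies the defining inequality of $\cC_t$ for every $t \in [T]$ simultaneously with probability at least $1-\delta$. The starting point is to use the linear model $X_t = \langle A_t,\theta_*\rangle + \epsilon_t$ to rewrite the regret of $\mathcal{B}$ against the comparator $\theta_*$. Setting $Z_t := \hat X_t - \langle A_t,\theta_*\rangle$, expanding $(X_t - \hat X_t)^2 = (\epsilon_t - Z_t)^2$, and using the definition $\rho_T(\theta_*) \le B_T$, one obtains
\begin{equation*}
    \sum_{s=1}^{t} Z_s^2 \;\le\; B_T + 2 \sum_{s=1}^{t} \epsilon_s Z_s \qquad \text{for all } t \in [T].
\end{equation*}
Since $\hat X_s$ is produced before observing $X_s$, the sequence $Z_s$ is predictable with respect to the natural filtration, so $\sum \epsilon_s Z_s$ is a martingale whose conditional increments are $1$-subgaussian scaled by $Z_s$.

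Next I would control the noise term via a time-uniform self-normalized concentration bound (the method of mixtures from \citet{abbasi2011improved}), which yields that with probability at least $1-\delta$, simultaneously for all $t \in [T]$,
\begin{equation*}
    \left| \sum_{s=1}^{t} \epsilon_s Z_s \right| \;\le\; \sqrt{ 2\bigl(1 + V_t\bigr) \log\!\left( \frac{\sqrt{1+V_t}}{\delta} \right)},
\end{equation*}
where $V_t := \sum_{s=1}^{t} Z_s^2$. Plugging this into the previous display gives a self-referential inequality of the shape $V_t \le B_T + 2\sqrt{2(1+V_t)\,\log\!\bigl(\sqrt{1+V_t}/\delta\bigr)}$, which must then be inverted to get an explicit upper bound on $V_t$.

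The main technical obstacle is the inversion step: one needs to solve the transcendental inequality carefully to obtain exactly the closed form $\gamma(\delta) = 2 + 2B_T + 32\log\!\bigl((\sqrt{8}+\sqrt{1+B_T})/\delta\bigr)$. I would proceed by the standard trick of using $2\sqrt{ab} \le a/c + cb$ for a well-chosen $c$ to separate $V_t$ from $\log(1+V_t)$, then absorb a logarithmic factor by a bootstrap: first derive a crude polynomial bound $V_t = O(B_T + \log(1/\delta))$, substitute it back into the logarithm, and re-solve to sharpen the constants. This yields $V_t \le \gamma(\delta) - 1$ uniformly in $t$.

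Finally, on this $(1-\delta)$-probability event, for every $t \in [T]$ we have
\begin{equation*}
    \|\theta_*\|_2^2 + \sum_{s=1}^{t-1} \bigl(\hat X_s - \langle A_s,\theta_*\rangle\bigr)^2 \;\le\; 1 + V_{t-1} \;\le\; \gamma(\delta),
\end{equation*}
using the assumption $\|\theta_*\|_2 \le 1$, which is precisely the defining condition for $\theta_* \in \cC_t$. A union bound is unnecessary because the self-normalized concentration already delivers a time-uniform guarantee.
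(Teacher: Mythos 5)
Your proposal is sound, but note that the paper does not prove this lemma at all: it is imported verbatim from \citet[Corollary~2]{abbasi2012online}, so there is no in-paper proof to compare against. Your sketch correctly reconstructs the argument of that cited source --- the expansion $\rho_T(\theta_*)=\sum_s Z_s^2 - 2\sum_s \epsilon_s Z_s$, the time-uniform self-normalized (method-of-mixtures) bound on $\sum_s \epsilon_s Z_s$, the inversion of the resulting transcendental inequality to get $\sum_s Z_s^2 \le 1+2B_T+32\log\big((\sqrt{8}+\sqrt{1+B_T})/\delta\big)$, and the final $+1$ from $\|\theta_*\|_2\le 1$ --- with the only soft spot being that the inversion step yielding the exact constants is described rather than carried out.
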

\citet{gerchinovitz2011sparsity} designed an algorithm, \seqsew, for \emph{sparse} linear regression that bounds $\rho_{T}(\theta)$ in terms of $\| \theta\|_0$ simultaneously for all comparators $\theta\in\RR^d$. Below here, we state his bound in the formulation of \citet{lattimore2018bandit}.
\begin{lemma}[\protect{\citet[Theorem 23.6]{lattimore2018bandit}}]\label{lem:gerchno_sparse}
Assume $\max_{t\in [T]} \|A_t \|_2\leq 1$ and $\max_{t\in [T]}|X_t| \leq 1$. 
There exists a universal constant $c$ such that algorithm \seqsew\ achieves, for any $\theta\in \RR^{d}$,
\begin{align*}
    \rho_{T}(\theta) \leq B_{T}=c\| \theta\|_0 \left\{\log(e+T^{1/2})+C_T \log \left(1+\frac{\| \theta\|_1}{\| \theta\|_0}\right) \right\}
\end{align*}
where $C_T=2+\log_2\log (e+T^{1/2})$.
\end{lemma}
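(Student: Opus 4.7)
I would realise \seqsew\ as an exponentially weighted average (EWA) forecaster on $\RR^d$ with a heavy-tailed sparsity-favouring prior $\pi$, and truncate its predictions to $[-1,1]$ so that the square loss is exp-concave on the effective prediction range. The classical EWA/mixability argument then gives the oracle inequality
\begin{equation*}
    \sum_{t=1}^{T}(X_t-\hat X_t)^2\ -\ \int \sum_{t=1}^{T}(X_t-\langle A_t,\theta\rangle)^2\,d\nu(\theta)\ \le\ \mathrm{KL}(\nu\,\|\,\pi),
\end{equation*}
valid for every probability measure $\nu$ on $\RR^d$. Here truncation of $\hat X_t$ together with $|X_t|\le 1$ ensures that replacing the truncated reference loss by the untruncated one on the right-hand side only decreases it. It therefore suffices, for each comparator $\theta\in\RR^d$, to exhibit a posterior $\nu=\nu_\theta$ for which the right-hand side is at most $B_T$ plus an $O(1)$ excess-loss term.

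The construction of $\nu_\theta$ is a localisation around the support. For a fixed $\theta$ with $J=\mathrm{supp}(\theta)$ and $s=\|\theta\|_0$, I would take $\nu_\theta$ to be a product measure that is uniform on $[\theta_i-\varepsilon,\theta_i+\varepsilon]$ for $i\in J$ and concentrated in a very tight neighbourhood of $0$ for $j\notin J$. Because $\|A_t\|_2\le 1$, the excess square loss of $\nu_\theta$ over the point mass at $\theta$ is $O(T\varepsilon^2)$, which becomes $O(1)$ once $\varepsilon=\Theta(1/\sqrt{T})$. Since the prior factorises over coordinates so does the KL, and the polynomial tails of $\pi$ yield a coordinatewise bound of the form $\log(1/\varepsilon)+O\!\left(\log(1+|\theta_i|/\tau)\right)$ on the support and $O(1)$ off the support. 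Summing over $i\in J$ and applying Jensen's inequality to the concave logarithm to replace $\sum_{i\in J}\log(1+|\theta_i|/\tau)$ by $s\log(1+\|\theta\|_1/(s\tau))$ produces
\begin{equation*}
    \mathrm{KL}(\nu_\theta\,\|\,\pi)\ \lesssim\ s\log(e+T^{1/2})\ +\ s\log\!\left(1+\tfrac{\|\theta\|_1}{s\tau}\right)\ +\ O(1),
\end{equation*}
which, for $\varepsilon=\tau=\Theta(1/\sqrt{T})$, already delivers the leading structure of $B_T$.

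\textbf{Main obstacle.} What is not yet in place is the multiplicative factor $C_T=2+\log_2\log(e+T^{1/2})$ in front of the $\log(1+\|\theta\|_1/\|\theta\|_0)$ term; this is an adaptation cost that does not come out of any single-scale construction. My plan to produce it is to enlarge the prior by mixing over a geometric grid $\{\tau_k=2^{-k}\}$ of scales, with a discrete heavy-tailed mass $\propto 1/(k(\log k)^2)$ on level $k$, and to choose ex post the level $k_\star$ that balances $\log(1+\|\theta\|_1/(s\tau_{k_\star}))$ against the cost $\log(k_\star(\log k_\star)^2)$ of pointing at it. This adaptive selection inflates the $\|\theta\|_1$-dependent term by exactly a $\log\log$ factor, matching $C_T$. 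Carrying out this mixture-of-scales argument cleanly, so that the resulting bound holds simultaneously for every $\theta\in\RR^d$ and yields the precise constant $C_T$ rather than a cruder $\log T$, is the delicate technical piece; everything else reduces to standard bookkeeping with the coordinatewise KL decomposition, Jensen's inequality, and the exp-concavity of the truncated square loss.
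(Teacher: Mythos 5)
The paper does not prove this lemma: it is imported verbatim as a citation of \citet[Theorem~23.6]{lattimore2018bandit}, which in turn restates the sparsity regret bound of \citet{gerchinovitz2011sparsity} for \seqsew. So there is no in-paper proof to compare against; your proposal has to be judged against the known argument from those sources. Your skeleton is the right one --- exponentially weighted averaging with a heavy-tailed product prior, truncated predictions to exploit exp-concavity of the square loss, the PAC-Bayesian oracle inequality $\sum_t(X_t-\hat X_t)^2 \le \inf_\nu\{\int\sum_t(X_t-\langle A_t,\theta\rangle)^2\,d\nu + \eta^{-1}\mathrm{KL}(\nu\|\pi)\}$, and a localisation posterior $\nu_\theta$ of width $\Theta(1/\sqrt{T})$ around the support --- and the single-scale computation you describe does yield $c\|\theta\|_0\{\log(e+\sqrt{T})+\log(1+\|\theta\|_1/(\|\theta\|_0\tau))\}$ for a fixed scale $\tau$.

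However, the proposal is not a proof: the step that produces the precise factor $C_T=2+\log_2\log(e+T^{1/2})$ is exactly the part you defer, and your plan for it has a genuine obstruction you do not address. The exp-concavity constant of the square loss depends on the range of the predictions, so when you ``enlarge the prior by mixing over a geometric grid of scales'' you are implicitly aggregating forecasters whose truncation levels (and hence admissible learning rates $\eta$) differ; a single grand mixture with one $\eta$ either breaks the mixability argument or forces a conservative $\eta$ that degrades the leading term. This is precisely why \citet{gerchinovitz2011sparsity} does not use an ex post choice over a static mixture but an \emph{online} doubling/restart scheme over the truncation and scale parameters, and the $\log_2\log(e+T^{1/2})$ factor is the count of doubling epochs, not a prior mass $\propto 1/(k(\log k)^2)$ paid inside a KL term. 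Your phrase ``choose ex post the level $k_\star$'' is also not available to an online algorithm as stated; it must be replaced either by the mixture (with the $\eta$ issue above) or by the doubling trick. Until that adaptive mechanism is carried out and shown to cost exactly a multiplicative $C_T$ on the $\log(1+\|\theta\|_1/\|\theta\|_0)$ term while leaving the $\|\theta\|_0\log(e+\sqrt{T})$ term intact, the stated bound is not established.
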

Using the confidence set~\eqref{eq:condifence-int} with $\mathcal{B}$ set to \seqsew, \citet{abbasi2012online} achieved the minimax optimal regret bound of $\tilde{O}(\sqrt{SdT})$. However, to construct $\cC_{t}$, the learner must know $\gamma(\delta)$, which depends on the unknown sparsity level $S=\|\theta_*\|_0$ through $B_T$.

\section{A multi-level sparse linear bandit algorithm}
In this section, we introduce our main algorithm, \algnamep, whose pseudo-code is shown in Algorithm~\ref{alg:expucb-prior}. The algorithm, which runs \seqsew\ as base algorithm $\mathcal{B}$, uses a hierarchy of confidence sets of increasing radius. In each round $t=1, \ldots, T$, after receiving the action set $\cA_t$, the algorithm draws the index $I_t$ of the confidence set for time $t$ by sampling from the distribution $\{q_i\}_{i\in[n]}$. Then the algorithm plays the action $A_t$ using the confidence set $\cC_t^{I_t}:=\{\theta\in \RR^{d}: \|\theta-\hat{\theta}_{t-1}\|_{2}^2 \leq 2^{I_{t}}\log T\}$ (where a larger $I_t$ implies a larger radius, and thus more exploration). Following the online to confidence set approach, upon receiving the reward $X_t$, the algorithm feeds the pair $(A_t,X_t)$ to \seqsew\ and uses the prediction $\hat{X}_t$ to update the regularized least squares estimate $\hat{\theta}_t$.

\begin{algorithm}[t]
  \caption{$\algnamep$}
  \label{alg:expucb-prior}
\begin{algorithmic}[1]
   \STATE\textbf{Input:} $T\in\NN$ and $\{q_{s}\}_{s\in [n]}$
    \STATE\textbf{Initialization:} Let  $V_0=I$, $\hat{\theta}_{0}=(0,\ldots,0)$
   \FOR{$t=1,2,\ldots,T$}
   \STATE Receive action set $\cA_t$ and draw $I_t$ from distribution $\{q_{s}\}_{s\in [n]}$ \label{line:choosei}
   \STATE Choose action
   ${\displaystyle
       A_t = \argmax_{a\in\cA_{t}} \left( \langle a,\hat\theta_{t-1} \rangle + \|a\|_{V_{t-1}^{-1}}\sqrt{2^{I_t}\log T}
    \right) }$
    \STATE Receive reward $X_t$ \label{line:chooseaction}
   \STATE $V_t=V_{t-1}+A_t A_t^{\top}$ 
   \STATE Feed $(A_t,X_t)$ to \seqsew\ and obtain prediction $\hat{X}_t$ \label{line:feedalgo}
   \STATE Compute regularized least squares estimate ${\displaystyle
        \hat{\theta}_{t}=\argmin_{\theta\in \RR^{d}}\bigg(\|\theta\|_{2}^2+\sum_{s=1}^{t}\big(\hat{X}_{s}-\langle\theta, A_{s} \rangle \big)^2\bigg)
    }$
    \ENDFOR
\end{algorithmic}
\end{algorithm}

Let $\alpha_i= 2^{i}\log T$ for all $i\in\NN$.
Let $n\in\NN$ be large enough such that $\alpha_n \geq \gamma(1/T)$ where $\gamma(\delta)$ is defined in Lemma~\ref{lem:self-con-bound} for $\mathcal{B} = \seqsew$.
Hence, $n=\Theta(\log d)$, which gives $\alpha_{n}= \Theta(d\log T)$.
Our bounds depend on the following quantity, which defines the index of the smallest ``safe'' confidence set (i.e., the smallest $i \in [n]$ such that $\theta_* \in \cC_t^i$ for all $t\in [T]$),
\begin{align}
\label{eq:best-arm}
    o := \argmin_{i\in [n]} \bigg\{  \gamma(1/T) \le \alpha_{i} \bigg\}
\end{align}
The choice of our confidence set (Line~\ref{line:choosei} in Algorithm~\ref{alg:expucb-prior}) is justified by the following result, which implies that $o$ is safe.
\begin{restatable}{lemma}{keylemma}
\label{lem:new}
For $\cC_t$ defined in~\eqref{eq:condifence-int}, we have that $\cC_{t} \subseteq \cC^o_{t}$ for all $t\in [T]$.
\end{restatable}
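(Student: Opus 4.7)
The plan is to rewrite the quadratic form defining $\cC_t$ so that it is centered at the ridge estimate $\hat\theta_{t-1}$, and then use (i) the positive-semidefiniteness of $V_{t-1}-I$ and (ii) the definition of $o$ in~\eqref{eq:best-arm}.

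First, I would expand the defining inequality of $\cC_t$. Writing $b_{t-1}=\sum_{s=1}^{t-1}\hat X_s A_s$ and using $V_{t-1}=I+\sum_{s=1}^{t-1}A_s A_s^\top$, a direct expansion gives
\begin{align*}
\|\theta\|_2^2+\sum_{s=1}^{t-1}\bigl(\hat X_s-\langle A_s,\theta\rangle\bigr)^2
=\theta^\top V_{t-1}\theta-2\theta^\top b_{t-1}+\sum_{s=1}^{t-1}\hat X_s^2.
\end{align*}
Since $\hat\theta_{t-1}=V_{t-1}^{-1}b_{t-1}$ by the closed form of regularized least squares (this is exactly the $\argmin$ computed in Algorithm~\ref{alg:expucb-prior}), completing the square yields
\begin{align*}
\|\theta\|_2^2+\sum_{s=1}^{t-1}\bigl(\hat X_s-\langle A_s,\theta\rangle\bigr)^2
=\|\theta-\hat\theta_{t-1}\|_{V_{t-1}}^2+L^\star_{t-1},
\end{align*}
where $L^\star_{t-1}:=\|\hat\theta_{t-1}\|_2^2+\sum_{s=1}^{t-1}\bigl(\hat X_s-\langle A_s,\hat\theta_{t-1}\rangle\bigr)^2\ge 0$ is the minimum value of the ridge objective.

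Second, let $\theta\in\cC_t$. Combining the above identity with the defining inequality of $\cC_t$ and dropping the nonnegative term $L^\star_{t-1}$ gives $\|\theta-\hat\theta_{t-1}\|_{V_{t-1}}^2\le \gamma(1/T)$. Because $V_{t-1}\succeq I$, we also have $\|\theta-\hat\theta_{t-1}\|_2^2\le \|\theta-\hat\theta_{t-1}\|_{V_{t-1}}^2\le \gamma(1/T)$. Finally, the definition of $o$ in~\eqref{eq:best-arm} is precisely $\alpha_o\ge \gamma(1/T)$, so $\|\theta-\hat\theta_{t-1}\|_2^2\le \alpha_o=2^o\log T$, meaning $\theta\in\cC_t^o$. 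This establishes $\cC_t\subseteq \cC_t^o$.

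There is no real obstacle here: the entire argument is the standard completion-of-the-square identity for ridge regression, combined with the minimality of $o$. The one subtlety worth being careful about is that the algorithm's confidence set is written in the Euclidean norm while $\cC_t$ is naturally centered in the $V_{t-1}$-norm; the inclusion $V_{t-1}\succeq I$ is exactly what bridges the two, so this is the step I would flag explicitly in the write-up.
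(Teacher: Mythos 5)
Your proof is correct and follows essentially the same route as the paper's: complete the square in the ridge objective to rewrite the defining quadratic of $\cC_t$ as $\|\theta-\hat\theta_{t-1}\|_{V_{t-1}}^2$ plus a nonnegative remainder, drop the remainder, and invoke the definition of $o$. The only (cosmetic) difference is your final step passing from the $V_{t-1}$-norm to the Euclidean norm via $V_{t-1}\succeq I$: the paper's appendix defines $\cC_t^o$ directly with the $\|\cdot\|_{V_{t-1}}$-norm (which is the definition its downstream analysis actually uses), so it stops one step earlier; your intermediate bound $\|\theta-\hat\theta_{t-1}\|_{V_{t-1}}^2\le\gamma(1/T)\le\alpha_o$ already gives that stronger conclusion.
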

As we use \seqsew\ as base algorithm $\mathcal{B}$, $\alpha_o = O(S\log T)$. Our main result is an upper bound on the regret of \algnamep. 
\begin{restatable}{theorem}{sparselinucb}
\label{thm:fre:prior}
The expected regret of \algnamep\ run with distribution $\{q_s\}_{s\in [n]}$ satisfies
\begin{align*}
    \EE[R_{T}]= O\left((\log T)\sum_{s\geq o} \sqrt{d 2^s T q_{s}} + (\log T)\sqrt{ST d/Q} \right)
\end{align*}
where $Q = \sum_{s\geq o} q_{s}$.
\end{restatable}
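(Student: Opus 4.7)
The plan is to split the $T$ rounds into \emph{good rounds} ($I_t \ge o$), where $\theta_* \in \cC_t^{I_t}$ holds by combining Lemmas~\ref{lem:self-con-bound} and~\ref{lem:new}, and \emph{bad rounds} ($I_t < o$), where the sampled confidence set may fail to contain $\theta_*$. The event that $\cC_t$ from Lemma~\ref{lem:self-con-bound} ever misses $\theta_*$ has probability at most $1/T$ and contributes only $O(1)$ to the expected regret, so I condition on its complement throughout.

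For good rounds I would follow the standard optimistic template. Conditionally on $I_t = s \ge o$, the inclusion $\theta_*\in\cC_t^o\subseteq\cC_t^s$ together with the definition of $A_t$ gives an instantaneous regret of at most $2\sqrt{\alpha_s}\|A_t\|_{V_{t-1}^{-1}}$. A Cauchy--Schwarz within $\{t : I_t = s\}$, combined with the elliptical potential bound $\sum_t\|A_t\|^2_{V_{t-1}^{-1}} = O(d\log T)$ and Jensen's inequality applied to $\sqrt{|\{t : I_t = s\}|}$ with $\EE|\{t : I_t = s\}| = Tq_s$, yields the first target term
\begin{equation*}
    \EE\bigl[R_T^{\mathrm{good}}\bigr] = O\Bigl((\log T)\sum_{s\ge o}\sqrt{d\,2^s T q_s}\Bigr).
\end{equation*}

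For a bad round the key device is the \emph{virtual action} $A_t^o = \argmax_{a\in\cA_t}\bigl(\langle a,\hat\theta_{t-1}\rangle + \|a\|_{V_{t-1}^{-1}}\sqrt{\alpha_o}\bigr)$, i.e.\ the action the algorithm would have played had it drawn $I_t = o$. Using $\theta_*\in\cC_t^o$ together with the optimality of $A_t^o$ at radius $\sqrt{\alpha_o}$, a short optimism manipulation bounds the instantaneous regret in a bad round by $O\bigl(\sqrt{\alpha_o}(\|A_t^o\|_{V_{t-1}^{-1}} + \|A_t\|_{V_{t-1}^{-1}})\bigr)$. The $\|A_t\|_{V_{t-1}^{-1}}$ piece is easy: Cauchy--Schwarz plus the elliptical potential give a $\tilde O(\sqrt{STd})$ contribution, absorbed into the target $\tilde O(\sqrt{STd/Q})$.

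The hard part is bounding $\EE\bigl[\sum_{t:I_t<o}\|A_t^o\|_{V_{t-1}^{-1}}\bigr]$ by $\tilde O(\sqrt{Td/Q})$. My key technical lemma would state: \emph{in every good round, $\|A_t\|_{V_{t-1}^{-1}} \ge \|A_t^o\|_{V_{t-1}^{-1}}$.} This falls out of summing the two optimality inequalities for $A_t$ at radius $\sqrt{\alpha_{I_t}}$ and for $A_t^o$ at radius $\sqrt{\alpha_o}$, which rearrange to $(\|A_t\|_{V_{t-1}^{-1}} - \|A_t^o\|_{V_{t-1}^{-1}})(\sqrt{\alpha_{I_t}} - \sqrt{\alpha_o}) \ge 0$. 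Conditioning on the history and taking expectation only over $I_t$, this lemma gives $\EE_{I_t}\bigl[\|A_t\|^2_{V_{t-1}^{-1}}\ind\{I_t \ge o\}\bigr] \ge Q\|A_t^o\|^2_{V_{t-1}^{-1}}$; summing in $t$ and invoking the elliptical potential lemma yields $\EE\bigl[\sum_t \|A_t^o\|^2_{V_{t-1}^{-1}}\bigr] = O(d\log T / Q)$. A final Cauchy--Schwarz across bad rounds (expected count $\le T$) and multiplication by $\sqrt{\alpha_o} = O(\sqrt{S\log T})$ produce the desired $O((\log T)\sqrt{STd/Q})$ term. The whole argument hinges on the key lemma, which is where the $1/Q$ savings come from.
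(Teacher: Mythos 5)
Your proposal is correct and reaches the stated bound. The overall architecture coincides with the paper's: the same three-way decomposition (failure event, good rounds $I_t\ge o$, bad rounds $I_t<o$), the same use of the virtual action $A_t^o$ to control the bad rounds, and the same monotonicity fact $\|A_t^p\|_{V_{t-1}^{-1}}\le\|A_t^q\|_{V_{t-1}^{-1}}$ for $p\le q$ (the paper's Lemma~\ref{lem:ap-aq}), which you rederive correctly from the two optimality inequalities. Where you genuinely diverge is at the crux: extracting the $1/Q$ factor from $\EE\big[\sum_t\min\{1,\|A_t^o\|^2_{V_{t-1}^{-1}}\}\big]$. The paper does this with a peeling argument (Lemma~\ref{lem:forced-2}), partitioning $[T]$ into $O(\log T)$ blocks according to $\det V_{t-1}\in[2^{sd},2^{(s+1)d})$ and bounding each block's contribution by $(2d+1)/Q$ via determinant increments. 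You instead observe that, conditionally on the history, $I_t$ is drawn independently with $\PP(I_t\ge o)=Q$ while $A_t^o$ is already determined, and that $\min\{1,\|A_t\|^2_{V_{t-1}^{-1}}\}\ge\min\{1,\|A_t^o\|^2_{V_{t-1}^{-1}}\}$ on $\{I_t\ge o\}$; the tower property then gives $Q\,\EE\big[\sum_t\min\{1,\|A_t^o\|^2_{V_{t-1}^{-1}}\}\big]\le\EE\big[\sum_t\min\{1,\|A_t\|^2_{V_{t-1}^{-1}}\}\big]\le 2\,\EE[\log\det V_T]=O(d\log T)$ in one line against the global elliptical potential. This is valid (it relies on $\{q_s\}$ being a fixed distribution, which holds for \algnamep\ but would fail for the adaptive \algname), yields the same $O(d\log T/Q)$ total as the paper's block-by-block bound, and is more elementary. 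Your two minor deviations from the paper's bad-round treatment---keeping the $\|A_t\|_{V_{t-1}^{-1}}$ term separate rather than absorbing it into $\|A_t^o\|_{V_{t-1}^{-1}}$ via the monotonicity lemma, and bounding its contribution by $\tilde{O}(\sqrt{STd})\le\tilde{O}(\sqrt{STd/Q})$---are also sound and do not affect the final rate.
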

If the sparsity level $S$ is indeed known, then $o$ in~\eqref{eq:best-arm} can be computed and we get the following bound, which is tight up to log factors \citep{lattimore2018bandit}. 
\begin{corollary}
\label{coro-2}
Assume that the sparsity level $S$ is known and choose $\{q_s\}_{s\in [n]}$ with $q_{o}=1$, where $o$ is set as in~\eqref{eq:best-arm}. Then, the expected regret of $\algnamep$ is 
$
    \EE[R_{T}] = {O}\big(\sqrt{SdT}\log T\big)
$.
\end{corollary}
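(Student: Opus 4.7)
The plan is to derive Corollary~\ref{coro-2} as a direct specialization of Theorem~\ref{thm:fre:prior} with the one-point distribution $q_o = 1$, $q_s = 0$ for $s \neq o$. With this choice, $Q = \sum_{s\ge o} q_s = 1$, and the sum $\sum_{s\ge o}\sqrt{d\,2^s T q_s}$ collapses to a single term $\sqrt{d\,2^o T}$. Substituting into the theorem yields
\begin{equation*}
    \EE[R_T] = O\Bigl((\log T)\sqrt{d\,2^o T} + (\log T)\sqrt{STd}\Bigr).
\end{equation*}

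Next I would bound $2^o$ in terms of $S$. By the definition~\eqref{eq:best-arm}, $o$ is the smallest index such that $\alpha_o = 2^o \log T \ge \gamma(1/T)$. Since \algnamep{} uses \seqsew{} as its base learner $\mathcal{B}$, Lemma~\ref{lem:gerchno_sparse} applied to the comparator $\theta_*$ (with $\|\theta_*\|_0 = S$ and $\|\theta_*\|_2 \le 1$) gives $B_T = O(S\log T)$, and plugging this into the formula for $\gamma(\delta)$ in Lemma~\ref{lem:self-con-bound} with $\delta = 1/T$ gives $\gamma(1/T) = O(S \log T)$. Thus the minimality of $o$ forces $2^o \log T = O(S \log T)$, i.e.\ $2^o = O(S)$.

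Substituting $2^o = O(S)$ into the displayed bound, the first term becomes $(\log T)\sqrt{d\,S\,T} = O(\sqrt{SdT}\,\log T)$, matching the second term. Combining them yields $\EE[R_T] = O(\sqrt{SdT}\,\log T)$, as claimed.

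There is essentially no obstacle here beyond a clean application of Theorem~\ref{thm:fre:prior}; the only point to be careful about is the identification $\alpha_o = \Theta(S\log T)$, which requires invoking the \seqsew{} regret guarantee to pin down $\gamma(1/T)$. One should also verify that with the known-$S$ prescription $q_o = 1$ the algorithm is well defined (no rounds use an index $i < o$, so the analysis of "bad rounds" in Theorem~\ref{thm:fre:prior} contributes nothing beyond the $\sqrt{STd/Q}$ term accounted for above).
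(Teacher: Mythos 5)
Your proposal is correct and matches the paper's (implicit) argument: the paper gives no separate proof of Corollary~\ref{coro-2}, treating it as an immediate specialization of Theorem~\ref{thm:fre:prior} with $q_o=1$, $Q=1$, together with the fact stated just before the theorem that $\alpha_o=O(S\log T)$ (hence $2^o=O(S)$) when \seqsew{} is the base learner. Your handling of the bad-round term and the identification $2^o=O(S)$ via minimality of $o$ is exactly the intended reasoning.
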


\textbf{An instance-dependent bound.}
$\algnamep$ also enjoys an instance-dependent regret bound comparable to that of \oful. Let $\Delta$ be the minimum gap between the optimal arm and any suboptimal arms over all rounds,
\begin{align}
\label{eq:sub-gap}
    \Delta=\min_{t\in [T]} \min_{a\in \cA_{t}\setminus A_{t}^*}\langle A_{t}^*-a, \theta_{*} \rangle,
\end{align}
where $A_{t}^*=\max_{a\in\cA_{t}} \langle a, \theta_* \rangle$ is the optimal arm for round $t$. 
\begin{restatable}{theorem}{sparselinucbdelta}
\label{thm:fre:prior-problem-dependent}
The expected regret of $\algnamep$ run with distribution $\{q_s\}_{s\in [n]}$ and using \seqsew\ as base algorithm satisfies 
\begin{align*}
    \EE[R_{T}]= O\left(\frac{(dS/Q) + d^2}{\Delta}(\log T)^2\right)
\end{align*}
where $Q = \sum_{s\geq o} q_{s}$.
\end{restatable}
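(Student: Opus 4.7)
\textbf{Proof plan for Theorem~\ref{thm:fre:prior-problem-dependent}.}
The plan is to mirror the decomposition used for Theorem~\ref{thm:fre:prior} but to convert every bound that is linear in $\|\cdot\|_{V_{t-1}^{-1}}$ into a bound quadratic in $\|\cdot\|_{V_{t-1}^{-1}}$, using the standard minimum-gap trick of \citet{abbasi2011improved}. Split the rounds into \emph{good} rounds $G=\{t\in[T]:I_t\geq o\}$, on which Lemma~\ref{lem:self-con-bound} combined with Lemma~\ref{lem:new} guarantees $\theta_*\in \cC_t^{I_t}$ with probability at least $1-1/T$, and \emph{bad} rounds $B=[T]\setminus G$, on which the confidence set actually used by the algorithm need not contain $\theta_*$. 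For every round, the instantaneous regret $r_t=\langle A_t^*-A_t,\theta_*\rangle$ is either $0$ or at least $\Delta$, so $r_t\leq r_t^2/\Delta$; this is the lever that turns first-order UCB bounds into second-order ones.

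On a good round, optimism of UCB with the confidence set $\cC_t^{I_t}$ gives $r_t\leq 2\sqrt{\alpha_{I_t}}\,\|A_t\|_{V_{t-1}^{-1}}$, hence $r_t\leq (4\alpha_{I_t}/\Delta)\,\|A_t\|_{V_{t-1}^{-1}}^2$. Summing over $t\in G$, bounding $\alpha_{I_t}\leq\alpha_n=O(d\log T)$, and invoking the elliptical potential lemma $\sum_{t=1}^T \min\{1,\|A_t\|_{V_{t-1}^{-1}}^2\}=O(d\log T)$ yields $\EE\bigl[\sum_{t\in G}r_t\bigr]=O\bigl(d^{2}(\log T)^2/\Delta\bigr)$, after accounting for the low-probability failure of the confidence sets via a trivial $O(1)$ contribution.

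For the bad rounds, let $A_t^o=\argmax_{a\in\cA_t}\bigl(\langle a,\hat\theta_{t-1}\rangle+\sqrt{\alpha_o}\,\|a\|_{V_{t-1}^{-1}}\bigr)$ be the arm that would be chosen with the ``safe'' radius $\alpha_o$. Reusing the key inequality from the proof of Theorem~\ref{thm:fre:prior}, on any bad round the played arm $A_t$ satisfies $r_t\leq 2\sqrt{\alpha_o}\,\|A_t^o\|_{V_{t-1}^{-1}}$; applying the gap trick again gives $r_t\leq(4\alpha_o/\Delta)\|A_t^o\|_{V_{t-1}^{-1}}^2$, with $\alpha_o=O(S\log T)$ by the regret bound of \seqsew. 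What remains is to establish
\begin{equation*}
    \EE\!\left[\sum_{t\in B}\|A_t^o\|_{V_{t-1}^{-1}}^2\right]=O\!\left(\frac{d\log T}{Q}\right).
\end{equation*}

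This last expectation is the main obstacle, because $A_t^o$ is a hypothetical arm and does not appear in $V_t$, so the elliptical potential lemma does not apply directly. The strategy is to recycle the peeling argument already used for Theorem~\ref{thm:fre:prior}: partition $[T]$ into maximal blocks $[\tau_k,\tau_{k+1})$ on which $\det V_t\leq 2\det V_{\tau_k}$, so that $V_{t-1}^{-1}$ and $V_{\tau_k-1}^{-1}$ are within a constant factor and $\|A_t^o\|_{V_{t-1}^{-1}}^2$ can be charged to $\|A_t^o\|_{V_{\tau_k-1}^{-1}}^2$. Within a block, good rounds occur independently with probability $Q$, and their played actions are exactly those that drive the determinant growth that closes the block. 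A determinant-ratio argument then charges $\sum_{t\in B\cap[\tau_k,\tau_{k+1})}\|A_t^o\|_{V_{\tau_k-1}^{-1}}^2$ to $\log(\det V_{\tau_{k+1}}/\det V_{\tau_k})$ inflated by $1/Q$, a concentration step (Freedman/Bernstein on the number of good rounds per block) being needed to pass from $Q$ in probability to $Q$ in expectation. Summing over the $O(\log\det V_T)=O(d\log T)$ blocks produces the displayed bound, and combining it with the good-round term yields $\EE[R_T]=O\bigl((dS/Q+d^{2})(\log T)^2/\Delta\bigr)$, as claimed.
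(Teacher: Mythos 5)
Your proposal is correct and follows essentially the same route as the paper: the same good/bad round decomposition, the same gap trick $r_t\leq r_t^2/\Delta$, the elliptical potential lemma for the good rounds, and the determinant-peeling argument charging $\sum_t\|A_t^o\|_{V_{t-1}^{-1}}^2$ over bad rounds to $O(d\log T/Q)$ (the paper's Lemma~\ref{lem:forced-2}). The only deviation is that you invoke a Freedman/Bernstein concentration step to convert $Q$ from probability to expectation, which is unnecessary: since $\ind\{I_t\geq o\}$ is drawn independently of the $\mathcal{F}_{t-1}$-measurable determinant increments, a direct tower-rule computation already gives $\EE[\sum_t x_t]\leq 2^s/Q$ in expectation.
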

\textbf{Sparsity-agnostic tuning of randomization.} Next, we look at a specific choice of $\{q_{s}\}_{s\in [n]}$. Fix $C \geq 1$ and let
\begin{equation}
\label{eq:beta-choice}
    q_s
=
    \left\{ \begin{array}{cl}
    C^2 2^{-s}   &   \text{if $C^2 2^{-s} < 1$} \\
    \kappa              &   \text{otherwise,}
    \end{array} \right.
\end{equation}
where $\kappa > 0$ is chosen so to normalize the probabilities.
It is easy to verify that for any $C \ge 1$,
\[ 
 \sum_{s\in [n]}\ind\big\{C^2 2^{-s} < 1\big\} q_s \leq 1
\]
implying that $\kappa$ can be chosen in $[0,1]$.
Combining Theorem~\ref{thm:fre:prior} and~\ref{thm:fre:prior-problem-dependent}, we obtain the following corollary providing a hybrid distribution-free and distribution-dependent bound.
\begin{corollary}
\label{coro-1}
Pick any $C \geq 1$ and let $\{q_{s}\}_{s \in [n]}$ be chosen as in~\eqref{eq:beta-choice}. 
Then the expected regret of $\algnamep$ is 
\[
    \EE[R_{T}]
=
    \tilde{O}\left(\min\left\{\max\big\{C, S/C\big\} \sqrt{dT},\, \frac{\max\big\{d^2, S^2d/C^2\big\}}{\Delta}\right\}\right)
\]
\end{corollary}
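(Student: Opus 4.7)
The plan is to combine Theorem~\ref{thm:fre:prior} and Theorem~\ref{thm:fre:prior-problem-dependent} (both upper bound $\EE[R_T]$ with the same choice of $\{q_s\}$), plug in the specific distribution~\eqref{eq:beta-choice}, and take the minimum of the two resulting estimates. The work reduces to two quantities: the normalization $Q=\sum_{s\geq o} q_s$ (appearing in both bounds) and the sum $\Sigma:=\sum_{s\geq o}\sqrt{d\,2^s T q_s}$ (appearing in the instance-independent bound). First I would recall that because \seqsew\ is used as $\mathcal{B}$, Lemma~\ref{lem:gerchno_sparse} gives $\gamma(1/T)=O(S\log T)$, and therefore the critical index satisfies $2^o=\tilde{\Theta}(S)$ by definition~\eqref{eq:best-arm}. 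Define the threshold $s^\star=2\log_2 C$, which is the dividing index between the two regimes of~\eqref{eq:beta-choice}: $q_s=\kappa$ if $s\leq s^\star$ and $q_s=C^2 2^{-s}$ if $s>s^\star$. I then split the analysis into two cases depending on the sign of $s^\star-o$, equivalently on whether $C\geq\sqrt{S}$ or $C<\sqrt{S}$.

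\textbf{Case $C<\sqrt{S}$ (so $s^\star<o$).} Every index $s\geq o$ falls in the geometric regime, so $q_s=C^2 2^{-s}$ throughout. A geometric sum gives $Q=\sum_{s=o}^{n}C^2 2^{-s}=\tilde{\Theta}(C^2/S)$, and each summand of $\Sigma$ equals $\sqrt{d\,2^s T\cdot C^2 2^{-s}}=C\sqrt{dT}$, so $\Sigma=\tilde{O}\bigl(C\sqrt{dT}\bigr)$ since $n=\Theta(\log d)$. Plugging into Theorem~\ref{thm:fre:prior} yields $\tilde{O}\bigl(C\sqrt{dT}+\sqrt{STd\cdot S/C^2}\bigr)=\tilde{O}\bigl(\max\{C,S/C\}\sqrt{dT}\bigr)$, and plugging into Theorem~\ref{thm:fre:prior-problem-dependent} yields $\tilde{O}\bigl((dS\cdot S/C^2+d^2)/\Delta\bigr)=\tilde{O}\bigl(\max\{d^2,S^2 d/C^2\}/\Delta\bigr)$.

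\textbf{Case $C\geq\sqrt{S}$ (so $s^\star\geq o$, up to rounding).} Here the tail $\sum_{s>s^\star}C^2 2^{-s}$ is a geometric sum of mass $\Theta(1)$, so $Q=\Omega(1)$; in particular $S/Q=\tilde{O}(S)=\tilde{O}(C^2)$. For $\Sigma$, the terms with $s>s^\star$ again contribute $C\sqrt{dT}$ each, while for $o\leq s\leq s^\star$ we use $\kappa\leq 1$ to get $\sqrt{d\,2^s T\kappa}\leq\sqrt{d\,2^{s^\star}T}=C\sqrt{dT}$; summing the $O(\log d)$ terms gives $\Sigma=\tilde{O}\bigl(C\sqrt{dT}\bigr)$. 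Theorem~\ref{thm:fre:prior} becomes $\tilde{O}\bigl(C\sqrt{dT}+\sqrt{STd}\bigr)=\tilde{O}\bigl(C\sqrt{dT}\bigr)=\tilde{O}\bigl(\max\{C,S/C\}\sqrt{dT}\bigr)$, and Theorem~\ref{thm:fre:prior-problem-dependent} becomes $\tilde{O}\bigl((dS+d^2)/\Delta\bigr)=\tilde{O}\bigl(d^2/\Delta\bigr)$, which matches $\tilde{O}\bigl(\max\{d^2,S^2 d/C^2\}/\Delta\bigr)$ since $C^2\geq S$ implies $S^2 d/C^2\leq Sd\leq d^2$.

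Since both Theorem~\ref{thm:fre:prior} and Theorem~\ref{thm:fre:prior-problem-dependent} apply to the same $\EE[R_T]$, the regret is at most the minimum of the two, delivering the stated bound. I do not expect a serious obstacle: the only subtle point is that $\kappa$ is not known explicitly, but $\kappa\in[0,1]$ is enough to obtain the trivial bound $\sqrt{d\,2^s T\kappa}\leq\sqrt{d\,2^s T}$ needed in Case~$C\geq\sqrt{S}$, and in Case~$C<\sqrt{S}$ the value of $\kappa$ does not enter at all because only the geometric-regime terms contribute to $Q$ and $\Sigma$ for $s\geq o$.
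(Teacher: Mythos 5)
Your proposal follows essentially the same route as the paper: combine Theorem~\ref{thm:fre:prior} and Theorem~\ref{thm:fre:prior-problem-dependent}, compute $Q$ and $\sum_{s\ge o}\sqrt{d\,2^sTq_s}$ under~\eqref{eq:beta-choice}, and case-split on where the threshold $s^\star=2\log_2 C$ sits relative to $o$. Your Case $C<\sqrt{S}$ matches the paper's Case~1 ($C^2<2^o$) and is correct, and your Case $C\ge\sqrt{S}$ matches the paper's Case~2 ($C^2\in[2^o,2^n)$). The one point where your justification does not hold as stated is the paper's Case~3, $C^2\ge 2^n$ (roughly $C\gtrsim\sqrt{d\log T}$): there the geometric tail $\sum_{s>s^\star}C^22^{-s}$ over $[n]$ is \emph{empty}, so your argument that $Q=\Omega(1)$ ``because the tail has mass $\Theta(1)$'' fails. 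The conclusion survives because in that regime every index is saturated, normalization forces $\kappa=1/n$, and hence $Q\ge 1/n=\Omega(1/\log d)$, which is $\tilde\Omega(1)$ and suffices for the $\tilde O$ bound; but you should state this sub-case explicitly rather than rely on the tail argument. Everything else (the per-term identity $\sqrt{d\,2^sTq_s}=C\sqrt{dT}$ in the geometric regime, the bound $\sqrt{d\,2^sT\kappa}\le C\sqrt{dT}$ via $\kappa\le 1$, and the use of $2^o=\tilde\Theta(S)$) is sound.
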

For $C = 1$ the above bound is $\tilde{O}(S\sqrt{dT})$, which is tight up to the factor $\sqrt{S}$ due to the lower bound of $\Omega(\sqrt{SdT})$ \citep{lattimore2018bandit}. However, as mentioned in \citet[Section 23.5]{lattimore2018bandit}, no algorithm can enjoy the regret of $\tilde{O}(\sqrt{SdT})$ simultaneously for all possible sparsity levels $S$. 
While our worst-case regret bound improves with a smaller $S$, the problem-dependent regret bound scales at least as $(d^2/\Delta)\log T$, which is independent of $S$. This raises an interesting question: could the problem-dependent bound also benefit from sparsity?
Even with a very small probability $p$ of choosing radius $\alpha_n$, the expected number of steps using $\alpha_n$ would be $pT$. The results in \citep{abbasi2012online} demonstrate that running the OFUL algorithm with $\alpha_n$ over $pT$ steps results in a regret of $\tilde{O}(d^2/\Delta)$. One simple way is to decrease the frequency of selecting radius $\alpha_{n}$. However, selecting $\alpha_{n}$ less than $d^2/\Delta^2$ times may prevent the algorithm from obtaining a good enough estimate of $\theta_*$ in certain settings.
\begin{remark}
At first glance, it may seem straightforward to select $C$ in Corollary~\ref{coro-1}, as setting $C=\sqrt{d}$ yields a regret of $\tilde{O}(d^2/\Delta)$ without apparent trade-offs. However, the trade-off lies in balancing the instance-dependent and worst-case regret bounds. Opting for $C=d$ indeed yields an instance-dependent bound of $\tilde{O}(d^2/\Delta)$. However, this comes at the expense of the worst-case bound, which remains $\tilde{O}(d\sqrt{T})$, negating any advantages derived from the sparsity assumption $S\ll d$.
\end{remark}
\section{Adaptive model selection for stochastic linear bandits}
A crucial parameter of \algnamep\ is $\{q_s\}_{s\in [n]}$, the distribution from which the radius of the confidence set is drawn. It is a natural question whether there exists an algorithm that adaptively updates this distribution based on the observed rewards. In this section we introduce \algname\ (Algorithm~\ref{alg:expucb}), which runs \myexp\ to dynamically adjust the distribution used by \algnamep.

\algname\ takes as input a forced exploration term $q$ and the learning rate $\eta$ for \myexp. Similarly to \algnamep, \algname\ designs confidence sets of various radii, but its selection method differs in two aspects. First, with probability $q$, the algorithm performs exploration based on the confidence set with the largest radius. With probability $1-q$, the algorithm instead draws the action based on \myexp. The distribution $P_t$ used by \myexp\ at round $t$ is based on exponential weights applied to the total estimated loss, denoted by $S_t$ (for technical reasons, we translate losses into rewards). The algorithm then draws $I_i$ from $P_t$ and selects the action $A_t$ based on the confidence set with radius $2^{I_t}\log T$. Finally, reward $X_t$ is observed and the pair $(A_t,X_t)$ is fed to \seqsew. The prediction $\hat{X}_t$ returned by \seqsew\ is used to update the regularized least squares estimate $\hat{\theta}_t$.

\begin{algorithm}[t]
  \caption{\algname}
  \label{alg:expucb}
\begin{algorithmic}[1]
    \STATE\textbf{Input:} $T\in\NN$, $\eta > 0$, $q \in (0,1]$
    \STATE\textbf{Initialization:} Let $S_{i,0}=0$ for all $i\in [n]$, $V_0 = I$, $\hat{\theta}_{0}=(0,\ldots,0)$
    \FOR{$t=1,2,\cdots,T$}
    \STATE Receive action set $\cA_t$ and draw a Bernoulli random variable $Z_t$ with $\PP(Z_t = 1) = q$ \label{line:cointoss}
    \IF{$Z_t = 1$}
    \STATE Choose optimistic action
   ${\displaystyle
       A_t = \argmax_{a\in\cA_{t}} \left( \langle a,\hat\theta_{t-1} \rangle + \|a\|_{V_{t-1}^{-1}}\sqrt{2^n\log T}
    \right) }$ \label{line-choosing-arm-0}
    \STATE Receive reward $X_t$;
    \ELSE
       \STATE Draw $I_t$ from the distribution
   ${\displaystyle
      P_{t,i}=\frac{\exp\left(\eta S_{t-1,i} \right)}{\sum_{j=1}^{n}\exp\big(\eta S_{t-1,j}\big)}
   }$ for $i\in [n]$; \label{line:confidence} 
    \STATE Choose action
   ${\displaystyle
       A_t = \argmax_{a\in\cA_{t}} \left( \langle a,\hat\theta_{t-1} \rangle + \|a\|_{V_{t-1}^{-1}}\sqrt{2^{I_t}\log T}
    \right) }$ \label{line:choosing-arm}
    \STATE Receive reward $X_t$;
    \STATE Compute
    ${\displaystyle
        S_{t,j}=S_{t-1,j}-\frac{\ind\{I_t=j\}(2-X_t)/4}{P_{t,j}}
    }$ for $j\in [n]$;
   \ENDIF  
    \STATE $V_t=V_{t-1}+A_t A_t^{\top}$; 
    \STATE Feed $(A_t,X_t)$ to \seqsew\ and obtain prediction $\hat{X}_t$\label{line:online-prediction};
    \STATE Compute regularized least squares estimate  ${\displaystyle
        \hat{\theta}_{t}=\argmin_{\theta\in \RR^{d}}\bigg(\|\theta\|_{2}^2+\sum_{s=1}^{t}\big(\hat{X}_{s}-\langle\theta, A_{s} \rangle \big)^2\bigg) 
    }$; \label{line:online-linear-regression} \\
    \ENDFOR
\end{algorithmic}
\end{algorithm}
The following theorem states the theoretical regret upper bound of \algname\ in terms of .
\begin{restatable}{theorem}{adalinucb}
\label{thm:alg11}
If the random independent noise $\epsilon_t$ in~\eqref{eq:linmodel} satisfies $\epsilon_t \in [-1,1]$ for all $t\in [T]$,
then the regret of \algname\ run with $\eta =\sqrt{(\log n)/(Tn)}$ for $n = \Theta(\log d)$ and $q \in (0,1]$ satisfies 
\begin{align*}
\notag
    \EE[R_{T}]
&\leq 
    \left(\sqrt{8\alpha_n q} + 4\sqrt{(2\alpha_{o})/q}\right) \sqrt{dT\log \bigg( 1+\frac{TL^2}{d}\bigg)+1} + O\big(\sqrt{nT\log n}\big)
\\&=
    \tilde{O}\left(\max\left\{\sqrt{dq}, \sqrt{S/q} \right\} \sqrt{dT}\right)~.
\end{align*}
\end{restatable}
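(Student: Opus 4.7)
The plan is to decompose the expected regret along the forced-exploration indicator $Z_t$, bound each piece separately, and glue them together using the regret guarantee of \myexp. For each $i\in[n]$, let $A_t^i := \argmax_{a\in\cA_t}\bigl(\langle a,\hat\theta_{t-1}\rangle + \sqrt{\alpha_i}\|a\|_{V_{t-1}^{-1}}\bigr)$ denote the action the algorithm would play at step $t$ under radius $\alpha_i$, so that $A_t = A_t^n$ whenever $Z_t=1$ and $A_t = A_t^{I_t}$ whenever $Z_t=0$. After conditioning on the clean event $\theta_*\in\cC_t\subseteq\cC_t^o$ for all $t\in[T]$ (which holds with probability at least $1-1/T$ by Lemma~\ref{lem:self-con-bound} together with Lemma~\ref{lem:new} applied with $\delta=1/T$, and contributes only $O(1)$ on its complement), the total regret splits as $\EE[R_T] = \EE[\sum_{t:Z_t=1}(X_t^*-\langle A_t,\theta_*\rangle)] + \EE[\sum_{t:Z_t=0}(X_t^*-\langle A_t,\theta_*\rangle)]$.

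For the forced-exploration rounds $Z_t=1$, the action $A_t^n$ uses the safe radius $\alpha_n\ge\gamma(1/T)$, so the standard optimism argument bounds the per-round regret by $2\sqrt{\alpha_n}\|A_t\|_{V_{t-1}^{-1}}$. Combining Cauchy-Schwarz, the elliptical potential bound $\sum_t\|A_t\|^2_{V_{t-1}^{-1}} \le 2d\log(1+TL^2/d)$, and Jensen's inequality on $\sqrt{|\{t:Z_t=1\}|}$ (whose expectation is at most $\sqrt{qT}$) produces the first term $\sqrt{8\alpha_n q}\,\sqrt{dT\log(1+TL^2/d)+1}$.

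For the Exp3 rounds $Z_t=0$, I will use the counterfactual $A_t^o$ to telescope
\begin{align*}
    X_t^* - \langle A_t, \theta_*\rangle
    = \bigl(X_t^* - \langle A_t^o, \theta_*\rangle\bigr) + \langle A_t^o - A_t, \theta_*\rangle.
\end{align*}
The first summand is bounded by $2\sqrt{\alpha_o}\|A_t^o\|_{V_{t-1}^{-1}}$ via optimism using $\theta_*\in\cC_t^o$. For the second, define the loss function $\ell_t(i) := (2 - \langle A_t^i,\theta_*\rangle - \epsilon_t)/4$, which lies in $[0,1]$ by the assumption $\epsilon_t\in[-1,1]$ and the norm bounds; the importance-weighted update in \algname\ uses the observed $\ell_t(I_t) = (2-X_t)/4$ and is unbiased for $\ell_t(j)$ conditional on the history. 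The standard \myexp\ analysis with $n$ experts and $\eta=\sqrt{(\log n)/(nT)}$ gives a loss-regret of $O(\sqrt{nT\log n})$ against the fixed comparator $i^*=o$; rescaling by the factor $4$ and noting that the $\epsilon_t$ terms cancel in expectation yields $\EE\bigl[\sum_{t:Z_t=0}\langle A_t^o - A_t, \theta_*\rangle\bigr] = O(\sqrt{nT\log n})$.

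The main obstacle is bounding $\EE[\sum_{t:Z_t=0}\|A_t^o\|_{V_{t-1}^{-1}}]$, since $A_t^o$ is never actually added to $V_t$ and the elliptical potential lemma does not apply directly. The resolution is an \emph{argmax monotonicity} argument: adding the two optimality inequalities defining $A_t^n$ (with radius $\alpha_n$) and $A_t^o$ (with radius $\alpha_o$) yields
\begin{align*}
    (\sqrt{\alpha_n} - \sqrt{\alpha_o})\bigl(\|A_t^n\|_{V_{t-1}^{-1}} - \|A_t^o\|_{V_{t-1}^{-1}}\bigr) \ge 0,
\end{align*}
so $\|A_t^n\|_{V_{t-1}^{-1}} \ge \|A_t^o\|_{V_{t-1}^{-1}}$. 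Because $A_t = A_t^n$ whenever $Z_t=1$ and $Z_t$ is independent of $\mathcal{F}_{t-1}$, taking conditional expectation gives $q\,\|A_t^o\|^2_{V_{t-1}^{-1}} \le \EE\bigl[\|A_t\|^2_{V_{t-1}^{-1}}\ind\{Z_t=1\}\,\big|\,\mathcal{F}_{t-1}\bigr]$. Summing over $t$ and applying the elliptical potential lemma to the \emph{played} actions yields
\begin{align*}
    \EE\Bigl[\textstyle\sum_t \|A_t^o\|^2_{V_{t-1}^{-1}}\Bigr] \le \frac{2d}{q}\log\!\bigl(1+TL^2/d\bigr),
\end{align*}
after which Cauchy-Schwarz gives $\EE\bigl[\sum_{t:Z_t=0}\|A_t^o\|_{V_{t-1}^{-1}}\bigr] = O\bigl(\sqrt{dT\log(1+TL^2/d)/q}\bigr)$, producing the second term $4\sqrt{2\alpha_o/q}\,\sqrt{dT\log(1+TL^2/d)+1}$. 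Adding the three contributions yields the bound claimed in the theorem.
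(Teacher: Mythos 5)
Your proof is correct and follows the same skeleton as the paper's: split rounds by the forced-exploration coin $Z_t$, handle the $Z_t=1$ rounds with the standard optimism plus elliptical-potential argument at radius $\alpha_n$, and on the $Z_t=0$ rounds insert the counterfactual $A_t^o$ so that the regret splits into an ``imaginary'' term $\sum_t\langle A_t^*-A_t^o,\theta_*\rangle$ controlled by optimism at radius $\alpha_o$, plus an \myexp\ term of order $\sqrt{nT\log n}$ obtained from the $[0,1]$-valued losses $(2-X_t)/4$ (your observation that the noise cancels exactly in $\ell_t(I_t)-\ell_t(o)$ matches the paper's treatment). The one place where you genuinely depart from the paper is the central obstacle you correctly identify: bounding $\EE\big[\sum_t\|A_t^o\|^2_{V_{t-1}^{-1}}\big]$ when $A_t^o$ is not the action added to $V_t$. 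The paper handles this with a peeling argument (Lemma~\ref{lem:forced-1}): it partitions $[T]$ into $O(\log T)$ blocks according to $\det(V_{t-1})$ and shows each block contributes at most $(2d+1)/q$ in expectation, using the fact that within a block the determinant can grow only by a bounded factor and that $A_t^n$ is actually played with probability $q$. You instead combine the monotonicity $\|A_t^o\|_{V_{t-1}^{-1}}\le\|A_t^n\|_{V_{t-1}^{-1}}$ (the paper's Lemma~\ref{lem:ap-aq}) with the identity $\EE\big[\|A_t\|^2_{V_{t-1}^{-1}}\ind\{Z_t=1\}\mid\mathcal{F}_{t-1}\big]=q\,\|A_t^n\|^2_{V_{t-1}^{-1}}$, then sum, apply the tower rule, and invoke the elliptical potential lemma on the \emph{played} actions. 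This is valid: since $V_{t-1}\succeq I$ and $\|A_t\|_2\le1$, the $\min\{1,\cdot\}$ in Lemma~\ref{lem:detV} is inactive, and the potential bound holds pathwise so the interchange with expectation is harmless; it yields the same $O(d\log T/q)$ bound with noticeably less machinery. The only care required is that $\mathcal{F}_{t-1}$ must include $\cA_t$ (so that $A_t^o,A_t^n$ are measurable) while excluding $Z_t$ — exactly the measurability convention the paper itself uses inside Lemma~\ref{lem:forced-1} — so this is a notational point, not a gap. Constants aside, your bound matches the theorem.
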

Although \algname\ dynamically adjusts the distribution used by \algnamep\ and may achieve better empirical performance, its regret bound is no better than that of \algnamep.
The issue is that the action chosen by \algname\ in Line~\ref{line:choosing-arm} does not ensure enough exploration to control the regret. Consequently, the algorithm needs to choose the optimistic action in Line~\ref{line-choosing-arm-0} with constant probability $q$. \algnamep\ has a similar parameter, $Q$, that bounds from the above the probability of choosing the optimistic action. The key difference is that $Q$ can be optimized for an unknown $S$ by carefully selecting the distribution $\{q_{s}\}_{s\geq 1}$, whereas the parameter $q$ does not provide a similar flexibility. 
\section{Model selection experiments}
In this section we describe some experiments we performed on synthetic data to verify whether \algname\ could outperform \oful\ in a model selection task. We also test the empirical performance of \algnamep\ on the same data (additional details on all the algorithms and the experimental setting are in Appendix~\ref{appsec: experimental details}). The data for our model selection experiments are generated using targets $\theta_*$ with different sparsity levels, as we know that sparsity affects the radius of the optimal confidence set. On the other hand, since no efficient implementation of \seqsew\ is known \citep[Section~23.5]{lattimore2018bandit}, we cannot implement the online to confidence set approach as described in~\citep{abbasi2012online} to capture sparsity. Instead, we run \algnamep\ and \algname\ with $\hat{X}_t = X_t$ for all $t\in [T]$, which---due to the form of our confidence sets---amounts to running the algorithms over multiple instances of \oful\ with different choices of radius $\alpha_i$ for $i\in [n]$.

We run \algnamep\ and \algname\ with $\alpha_i=\alpha_{i,t}=2^i \log t$  (a mildly time-dependent choice) for $i=0, 1,\cdots, \log_2 d$. We also include $\alpha_0 = 0$ corresponding to the greedy strategy $A_t = \argmax_{a\in\cA_t}\langle a,\hat{\theta}_{t-1}\rangle$. The suffix \_\texttt{Unif} indicates $\{q_s\}_{s\in [n]}$ set to $(\frac{1}{n}, \cdots, \frac{1}{n})$. The suffix \_\texttt{Theory} indicates $q_s = \Theta(2^{-s})$ for $s=0,\ldots,n$ as prescribed by~\eqref{eq:beta-choice}. Finally, we included \algnamep\_\texttt{Known} using $q_i = \ind\{i=o\}$ to test the performance when the optimal index $o$ (for the given $S$) is known in advance (see Corollary~\ref{coro-2}).

We run our experiments with a fixed set of random actions, $\cA_t=\cA$ for all $t \in [T]$, where $|\cA|=30$ and $\cA$ is a set of vectors drawn i.i.d.\ from the unit sphere in $\RR^{16}$. The target vector $\theta_*$ is a $S$-sparse ($S=1,2,4,8,16$) vector whose non-zero coordinates are drawn from the unit sphere in $\RR^S$. The noise $\epsilon_t$ is drawn i.i.d.\ from the uniform distribution over $[-1,1]$. Each curve is an average over $20$ repetitions with $T=10^4$ where, in each repetition, we draw fresh instances of $\cA$ and $\theta_*$.

As our implementations are not sparsity-aware, we cannot expect the regret to strongly depend on the sparsity level. Indeed, only the regret of \algnamep\_\texttt{Known} (which is tuned to the sparsity $S$) is significantly affected by sparsity.
The theory-driven choice of $\{q_s\}_{s\in [n]}$ (\algnamep\_\texttt{Theory}) performs better than the uniform assignment (\algnamep\_\texttt{Unif}), and is in the same ballpark as \oful. On the other hand, \algname\_\texttt{Unif} and \algname\_\texttt{Theory} outperform all the competitors, including \oful. This provides evidence that using \myexp\ for adaptive model selection may significantly boost the empirical performance of stochastic linear bandits.

\begin{figure}[h]
\centering
\begin{subfigure}{.34\linewidth}
\includegraphics[width=\linewidth]{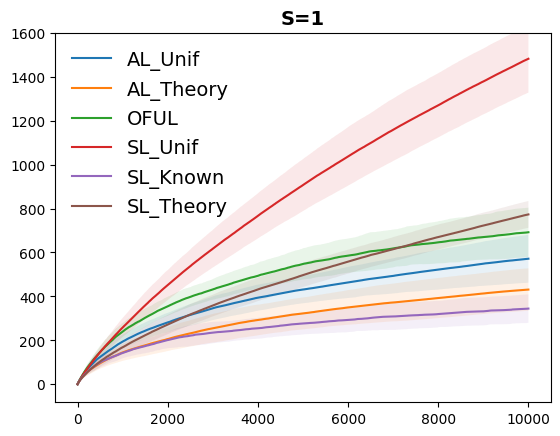}
\end{subfigure}
\hspace*{-4mm}
\begin{subfigure}{.34\linewidth}
\includegraphics[width=\linewidth]{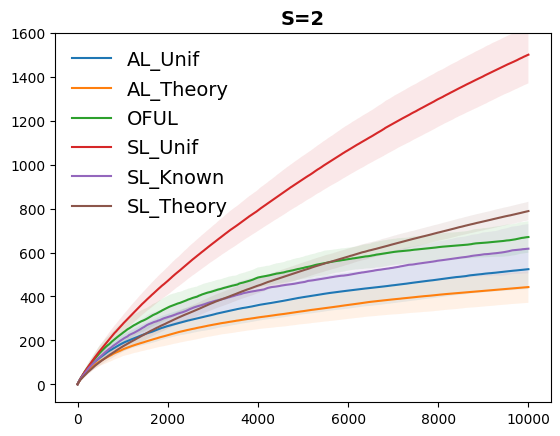}
\end{subfigure}
\hspace*{-4mm}
\begin{subfigure}{.34\linewidth}
\includegraphics[width=\linewidth]{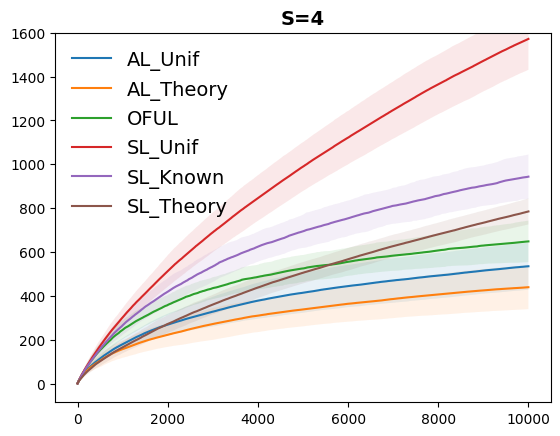}
\end{subfigure}
\\
\begin{subfigure}{.34\linewidth}
\includegraphics[width=\linewidth]{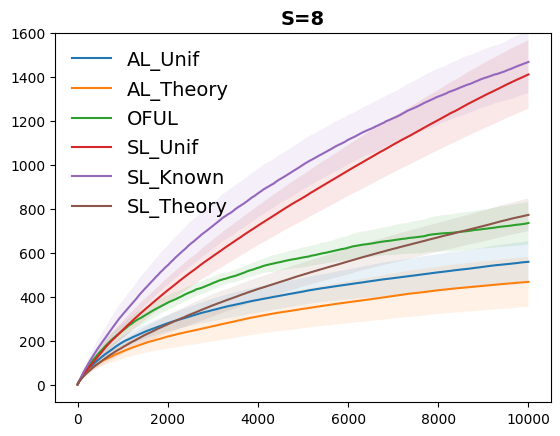}
\end{subfigure}
\quad
\begin{subfigure}{.34\linewidth}
\includegraphics[width=\linewidth]{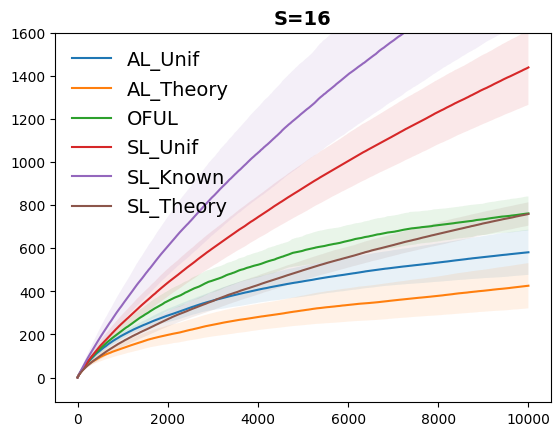}
\end{subfigure}
\caption{Experimental results with different sparsity levels $S\in\{1,2,4,8,16\}$. In each plot, the $X$-axis are time steps in $[1,10^4]$ and the $Y$-axis is cumulative regret. \texttt{AL} stands for $\algname$ and \texttt{SL} stands for $\algnamep$. }
\label{figures:synth_exp}
\end{figure}

\section{Limitations and open problems} 
Unlike previous works, we prove sparsity-agnostic regret bounds with no assumptions on the action sets or on $\theta_*$ (other than boundedness of $\|\theta_*\|$ and $\|a\|$ for $a\in\cA_t$, which are rather standard assumptions). For \algname, however, we do require boundedness of the noise $\epsilon_t$ (instead of just subgaussianity). We conjecture this requirement could be dropped at the expense of a further $\log T$ factor in the regret. Finally, for efficiency reasons our implementations are not designed to capture sparsity. Hence our experiments are limited to testing the impact of model selection.

Our work leaves some open problems:
\begin{enumerate}[parsep=1pt,wide]
    \item Proving a lower bound on the regret of sparse linear bandits when the sparsity level is unknown to the learner would be important. Citing again \citep[Section~23.5]{lattimore2018bandit}, no algorithm can enjoy regret $\tilde{O}(\sqrt{SdT})$ simultaneously for all sparsity levels $S$. However, we do not know whether the known lower bound $\Omega(\sqrt{SdT})$ can be strengthened to $\Omega(S\sqrt{dT})$ in the agnostic case.
    \item Our instance-dependent regret bound is of order $\tilde{O}\big(\max\{S^2, d\} \frac{d}{\Delta} \big)$. It would be interesting to prove an upper bound that improves on the factor $d^{2}/\Delta$, or a lower bound showing that $d^{2}/\Delta$ cannot be improved on.
    \item Our bound on the regret of \algname\ looks pessimistic due to the presence of the constant exploration probability $q$. It would be interesting to prove a bound that more closely reflects the good empirical performance of this algorithm.
\end{enumerate}

\begin{ack}
NCB and KJ acknowledge the financial support from the MUR PRIN grant 2022EKNE5K (Learning in Markets and Society), the FAIR (Future Artificial Intelligence Research) project, funded by the NextGenerationEU program within the PNRR-PE-AI scheme, and the the EU Horizon CL4-2022-HUMAN-02 research and innovation action under grant agreement 101120237, project ELIAS (European Lighthouse of AI for Sustainability).
\end{ack}

\bibliographystyle{plainnat}
\bibliography{reference}

\newpage
\appendix
\section{Notation}
In Table~\ref{table:notation} we list the most used notations. Next, we recall some definitions that are used throughout this appendix.
We have $V_t=I+\sum_{s=1}^t A_s A_s^{\top}$
and
\[
      \hat{\theta}_{t}=\argmin_{\theta\in \RR^{d}}\bigg(\|\theta\|_{2}^2+\sum_{s=1}^{t}\big(\hat{X}_{s}-\langle\theta, A_{s} \rangle \big)^2\bigg)
\]
where $A_s\in\cA_s$ is the action chosen by the learner at round $t$. Recall that $\alpha_i = 2^i\log T$. For $i \in [n]$,
\[
    A_{t}^i
=
    \argmax_{a\in\cA_{t}} \max_{\theta\in \cC^i_t} \langle \theta, a \rangle
=
    \argmax_{a\in \cA_{t}} \left( \langle a,\hat\theta_{t-1} \rangle + \|a\|_{V_{t-1}^{-1}}\sqrt{\alpha_i}
    \right)
\]
where
\[
       \cC_t^{i}=\bigg\{\theta\in\RR^d: \|\theta-\hat\theta_{t-1}\|_{V_{t-1}}^2 \leq \alpha_i \bigg\}~.
\]
Finally, recall definition~\eqref{eq:condifence-int} with $\delta = 1/T$,
\[
    \cC_{t}
=
    \left\{\theta\in \RR^{d}: \|\theta\|^2_2+\sum_{s=1}^{t-1}(\hat{X}_s-\langle A_s, \theta \rangle)^2 \leq \gamma(1/T)\right\}.
\]
and recall that $\cE = \bigcap_{t\in [T]} \{\theta_*\in\cC_t\}$.

\begin{table}
\small
\begin{center}
\caption{Notation.}
\label{table:notation}
\begin{tabular}{cp{12cm}}
\toprule
Symbol & Description \\[0.5ex] 
\midrule
$\cA_{t}$ & Action set at time $t$ \\
$\theta_*$ & True parameter of the linear model \\
$d$    & Dimension of $\theta_{*}$  \\
$S$    & Number of nonzero components in $\theta_*$ \\
$X_{t}$  & Random reward of pulling arm $A_{t}\in\cA_t$ \\
$\hat{X}_t$ & Prediction of \seqsew\ at time $t$ \\
$\cC_{t}^{s}$ & The confidence set with radius $2^{s}\log T$ and center $\hat{\theta}_{t-1}$ \\ 
$I_{t}$  & The index of the confidence set drawn at time $t$  \\
$q_{s}$  & The probability $\PP(I_{t}=s)$ of drawing index $s$ of the confidence set in \algnamep \\
$o$     &  The index of the smallest safe confidence set, defined in \eqref{eq:best-arm} \\ 
$Q$  &  The probability $\PP(I_t \ge o) = q_o+\cdots+q_n$ of drawing a safe confidence set in \algnamep \\
$A_{t}^{s}$ & The optimistic action for $\cC_{t}^{s}$   \\
$A_{t}^*$  &  The optimal action in $\cA_{t}$ \\
$\cE$    &   Event that $\theta_{*} \in \cC_{t}$ for all $t\in [T]$  \\ 
$\det{V}$  &  Determinant of $V$ \\
\bottomrule
\end{tabular}
\end{center}
\end{table}
\section{Analysis of \algnamep}

\sparselinucb*
\begin{proof}
Lemma~\ref{lem:new} implies $\cC_t \subset \cC_t^o$. Hence, if $\cE$ is true and $s<o$, then 
\begin{align}
    \langle \theta_{*}, A_{t}^{s} \rangle 
    & \geq  \langle \hat\theta_{t-1}, A_{t}^{s} \rangle-\sqrt{\alpha_o}\|A_{t}^s\|_{V_{t-1}^{-1}} \tag{$\theta_*\in \cC_t \subset \cC_t^o$} \notag\\
    & = \left(\langle \hat\theta_{t-1}, A_{t}^{s} \rangle+\sqrt{\alpha_s}\|A_{t}^s\|_{V_{t-1}^{-1}} \right) -(\sqrt{\alpha_o}+\sqrt{\alpha_s})\|A_{t}^s\|_{V_{t-1}^{-1}}  \notag \\
    & \geq \left(\langle \hat\theta_{t-1}, A_{t}^{o} \rangle+\sqrt{\alpha_s}\|A_{t}^o\|_{V_{t-1}^{-1}} \right) -(\sqrt{\alpha_o}+\sqrt{\alpha_s})\|A_{t}^s\|_{V_{t-1}^{-1}} \tag{maximality of $A_t^s$ in $\cC_t^s$} \\
    & = \langle \hat\theta_{t-1}, A_{t}^{o} \rangle+\sqrt{\alpha_o}\|A_{t}^o\|_{V_{t-1}^{-1}} -(\sqrt{\alpha_o}-\sqrt{\alpha_s})\|A_{t}^o\|_{V_{t-1}^{-1}} -(\sqrt{\alpha_o}+\sqrt{\alpha_s})\|A_{t}^s\|_{V_{t-1}^{-1}} \notag \\
    & \geq \langle \theta_{*}, A_{t}^{o} \rangle  -(\sqrt{\alpha_o}-\sqrt{\alpha_s})\|A_{t}^o\|_{V_{t-1}^{-1}} -(\sqrt{\alpha_o}+\sqrt{\alpha_s})\|A_{t}^s\|_{V_{t-1}^{-1}} \tag{$\theta_*\in \cC_t \subset \cC_t^o$} \\
    & \geq \langle \theta_{*}, A_{t}^{o} \rangle  -(\sqrt{\alpha_o}-\sqrt{\alpha_s})\|A_{t}^o\|_{V_{t-1}^{-1}} -(\sqrt{\alpha_o}+\sqrt{\alpha_s})\|A_{t}^o\|_{V_{t-1}^{-1}} \tag{by Lemma \ref{lem:ap-aq}} \\
    & = \langle \theta_{*}, A_{t}^{o} \rangle -2\sqrt{\alpha_o}\|A_{t}^o\|_{V_{t-1}^{-1}} \notag \\
    & \geq \langle \theta_{*}, A_{t}^{*} \rangle -3\sqrt{\alpha_o}\|A_{t}^o\|_{V_{t-1}^{-1}} \label{eq: Algo2 small index bound}
\end{align}
where in the first and the third inequalities, we use the fact that for any $A \in \mathbb{R}^d$,
\begin{align*}
\langle \theta_{*}-\hat{\theta}_{t-1}, A \rangle & \leq \|\theta_{*}-\hat{\theta}_{t-1}\|_{V_{t-1}} \|A\|_{V_{t-1}^{-1}}, \tag{Cauchy–Schwarz inequality} \\
& \leq \sqrt{\alpha_{o}} \|A\|_{V_{t-1}^{-1}} \tag{due to $\theta_*\in \cC_{t}^o$}
\end{align*}
and for the last inequality,
\begin{align*}
    \langle \theta_*, A_t^* \rangle &\leq \langle \hat{\theta}_{t-1}, A_t^o \rangle + \sqrt{\alpha_0} \|A_t^o\|_{V_{t-1}^{-1}} \tag{$\theta_*\in \cC_t^o$, maximality of $A_t^o$.}
\end{align*}

We can decompose the regret as follows,
\begin{align}
R_{T}  & =  \sum_{t=1}^{T} \ind\{\cE\}\langle \theta_*, A_{t}^*-A_{t}^{I_t} \rangle + \sum_{t=1}^{T} \ind\{\cE^c\}\langle \theta_*, A_{t}^*-A_{t}^{I_t} \rangle \notag \\
&= \sum_{t=1}^{T} \ind\{\cE^c\}\langle \theta_*, A_{t}^*-A_{t}^{I_t} \rangle +\sum_{t=1}^{T} \ind\{I_t\geq o,\cE\}\langle \theta_*, A_{t}^*-A_{t}^{I_t}\rangle +\sum_{t=1}^{T} \ind\{I_t<o,\cE\}\langle \theta_*, A_{t}^*-A_{t}^{I_t}\rangle \notag \\
    &\leq \sum_{t=1}^{T} \ind\{\cE^c\}\langle \theta_*, A_{t}^*-A_{t}^{I_t} \rangle +\sum_{t=1}^{T} \ind\{I_t\geq o,\cE\}\langle \theta_*, A_{t}^*-A_{t}^{I_t}\rangle \notag \\
    &\qquad + \sum_{t=1}^{T} \ind\{I_t<o,\cE\}\min\Big\{2, 3\sqrt{\alpha_o} \|A_{t}^o\|_{V_{t-1}^{-1}}\Big\} \notag \\
    & \leq \sum_{t=1}^{T} \ind\{\cE^c\}\langle \theta_*, A_{t}^*-A_{t}^{I_t} \rangle +\underbrace{\sum_{t=1}^{T} \ind\{I_t\geq o,\cE\}\langle \theta_*, A_{t}^*-A_{t}^{I_t}\rangle}_{\spadesuit}+ \underbrace{\sum_{t=1}^{T} \Big\{ 2, 3\sqrt{\alpha_o} \|A_{t}^o\|_{V_{t-1}^{-1}}\Big\}}_{\clubsuit}. \label{eq:alg2-decomposemain}
\end{align}
Since $\PP(\cE^{c})\leq \delta\leq \frac{1}{T}$, the first sum in the above line is easily bounded,
\[
    \sum_{t=1}^{T} \ind\{\cE^c\}\langle \theta_*, A_{t}^*-A_{t}^{I_t} \rangle\leq 2T \PP(\cE^{c}) \leq 2~.
\]
\textbf{Bounding term $\clubsuit$.}
For each $s \ge 0$, let
\begin{align*}
    \cT^{s}=\bigg\{t\in [T]: \det(V_{t-1})\in \left[2^{sd},{2^{(s+1)d}}\right) \bigg\}.
\end{align*}
Note that $\det(V_t)$ is monotone increasing w.r.t $t$. Define $s'=\lceil \log_{2} \det(V_{T})/d\rceil$. Then,
\begin{align*}
     [T]=\bigcup_{s=1}^{s'} \cT^{s}.
\end{align*}
Therefore, 
\begin{align*}
    \clubsuit & \leq \sum_{s=1}^{s'} \sum_{t\in \cT^s} \min \Big\{2, 3\sqrt{\alpha_o} \|A_{t}^o\|_{V_{t-1}^{-1}}\Big\}~.
\end{align*}
By applying Lemma \ref{lem:forced-2}, we  obtain
\begin{align*}
   \EE[\clubsuit]&= \EE \left[\sum_{s=1}^{s'} \sum_{t\in \cT^s}  \min\Big\{2, 3\sqrt{\alpha_o} \|A_{t}^o\|_{V_{t-1}^{-1}} \Big\} \right] \notag \\
   &= 3\sqrt{\alpha_o} \EE \left[\sum_{s=1}^{s'} \sum_{t\in \cT^s}  \min\Big\{1, \|A_{t}^o\|_{V_{t-1}^{-1}} \Big\} \right] \tag{$\alpha_o \geq 1$} \\
& \leq  3\sqrt{\alpha_o} \sqrt{T \cdot \EE \left[\sum_{s=1}^{s'}\Bigg[\sum_{t\in \cT^{s}} \min\left\{1, \|A^o_t\|^2_{V_{t-1}^{-1}}\right\}  \Bigg]\right]} \tag{Cauchy–Schwarz inequality} \\
& \leq 3\sqrt{\alpha_o T} \sqrt{\sum_{s=1}^{s'} \left({2 d/Q}+1/Q \right)} \tag{due to Lemma \ref{lem:forced-2}}\\
& \leq  3\sqrt{\alpha_o T} \sqrt{(2d+1)s'/Q}  \\
& = 3\sqrt{\alpha_o T} \sqrt{(2d+1)/Q\lceil \log_{2} \det(V_{T})/d\rceil}\\
& =O\Big(\log T\sqrt{ST d/Q} \Big).
\end{align*}

\textbf{Bounding term $\spadesuit$.}
Let $T_{s}=\sum_{t=1}^{T} \ind\{I_t=s\}$.
\begin{align*}
    \EE[\spadesuit]& =\EE\left[ \sum_{t=1}^{T} \ind\{I_t\geq o,\cE\}\langle \theta_*, A_{t}^*-A_{t}^{I_t}\rangle \right] \notag \\
    &\leq \EE \left[\sum_{t=1}^{T} \ind\{I_t\geq o, \cE\} \cdot \min\bigg\{2,\sqrt{\alpha_{I_t}}\|A_{t}^{I_t}\|_{V_{t-1}^{-1}} \bigg\} \right]\notag \\
    & \leq 2\sum_{s\geq o} \EE\left[\sqrt{\alpha_{s}T_{s} }\sqrt{\sum_{t\in\ [T]}\ind\{I_t=s\}\min\left\{1, \|A^s_t\|^2_{V_{t-1}^{-1}}\right\}}\right] \tag{Cauchy–Schwarz inequality} \\
    & \leq 2\sum_{s\geq o} \EE\left[\sqrt{\alpha_{s}T_{s} }\sqrt{\sum_{t\in\ [T]}\min\left\{1, \|A_t\|^2_{V_{t-1}^{-1}}\right\}}\right]  \\
    &  \leq   2\sum_{s\geq o} \EE\left[\sqrt{\alpha_{s}T_{s} }\sqrt{2\log \det V_{T}}\right]   \tag{Lemma \ref{lem:detV}}\\
    &  \leq   2\sum_{s\geq o} \EE\left[\sqrt{\alpha_{s}T_{s} }\right]\cdot O(\sqrt{d\log T})   \tag{upper bound on $\det(V_T)$}\\
    &  \leq   2\sum_{s\geq o} \sqrt{\alpha_s} \sqrt{\EE\left[T_{s}\right]}\cdot O(\sqrt{d\log T})   \tag{Jensen's inequality}\\
    & =O\bigg(\sum_{s\geq o} \sqrt{{\alpha_{s} dTq_s}\log T}  \bigg) \tag{because $\PP(I_t=s)=q_s$.}
\end{align*}

Substituting the bounds of $\spadesuit$ and $\clubsuit$ to \eqref{eq:alg2-decomposemain}, we have
\begin{align*}
    \EE[R_{T}]\leq O\bigg(\sum_{s\geq o} \sqrt{d\alpha_{s}T\log T q_s}+ \log T\sqrt{SdT/Q} \bigg)
\end{align*}
concluding the proof.
\end{proof}

\sparselinucbdelta*
\begin{proof}
\begin{align}
R_{T}  & =  \sum_{t=1}^{T} \ind\{\cE\}\langle \theta_*, A_{t}^*-A_{t}^{I_t} \rangle + \sum_{t=1}^{T} \ind\{\cE^c\}\langle \theta_*, A_{t}^*-A_{t}^{I_t} \rangle \notag \\
&\leq \sum_{t=1}^{T} \ind\{\cE^c\}\langle \theta_*, A_{t}^*-A_{t}^{I_t} \rangle +\sum_{t=1}^{T} \ind\{I_t\geq o,\cE\}\frac{\langle \theta_*, A_{t}^*-A_{t}^{I_t}\rangle^2}{\Delta} \notag \\
& \qquad +\sum_{t=1}^{T} \ind\{I_t<o,\cE\}\frac{\langle \theta_*, A_{t}^*-A_{t}^{I_t}\rangle^2}{\Delta} \tag{minimality of $\Delta$} \\
    &\leq \sum_{t=1}^{T} \ind\{\cE^c\}\langle \theta_*, A_{t}^*-A_{t}^{I_t} \rangle +\underbrace{\sum_{t=1}^{T} \ind\{I_t\geq o,\cE\}\frac{\langle \theta_*, A_{t}^*-A_{t}^{I_t}\rangle^2}{\Delta}}_{\spadesuit} \notag \\
    &\qquad + \underbrace{9 \sum_{t=1}^{T} \ind\{I_t<o,\cE\}\frac{\min\Big\{1, {\alpha_o} \|A_{t}^o\|^2_{V_{t-1}^{-1}}\Big\}}{\Delta}}_{\clubsuit}\tag{by applying \eqref{eq: Algo2 small index bound} to $\clubsuit$}
\end{align}
\textbf{Bounding term $\clubsuit$.}
Let $s'=\lceil \log_{2} \det(V_{T})/d\rceil$.
By applying Lemma \ref{lem:forced-2}, we  obtain
\begin{align*}
   \EE[\clubsuit]&\leq \frac{9}{\Delta} \EE \left[\sum_{s\geq 0} \sum_{t\in \cT^s}  \min\Big\{1, \alpha_o \|A_{t}^o\|_{V_{t-1}^{-1}}^2 \Big\} \right] \notag \\
& \leq  \frac{9\alpha_o}{\Delta}{ \sum_{s=1}^{s'}\EE\Bigg[\sum_{t\in \cT^{s}} \min\left\{1, \|A^o_t\|^2_{V_{t-1}^{-1}}\right\}  \Bigg]}  \tag{$\alpha_o\geq 1$}\\
& \leq \frac{9\alpha_o}{\Delta} {\sum_{s=1}^{s'} \left({2 d/Q}+1/Q\right)} \tag{due to Lemma \ref{lem:forced-2}}\\
& \leq \frac{27\alpha_o  ds'/Q}{\Delta}  \\
& =O\Bigg(\frac{dS(\log T)^2/Q}{\Delta} \Bigg)
\tag{because $s' = O(\log T)$.}
\end{align*}
\textbf{Bounding term $\spadesuit$.}
\begin{align*}
    \EE[\spadesuit]& =\EE\left[ \sum_{t=1}^{T} \ind\{I_t\geq o,\cE\}\frac{\langle \theta_*, A_{t}^*-A_{t}^{I_t}\rangle^2}{\Delta}\right] \notag \\
    &\leq \frac{4}{\Delta}\EE \left[\sum_{t=1}^{T} \ind\{I_t\geq o, \cE\} \cdot \min\bigg\{1,\alpha_{I_t}\|A_{t}^{I_t}\|^2_{V_{t-1}^{-1}} \bigg\} \right]\notag \\
    & \leq 4\sum_{s\geq o} \frac{\alpha_s}{\Delta} \EE\left[\sum_{t\in\ [T]}\min\left\{1, \|A_t\|^2_{V_{t-1}^{-1}}\right\}\right]  \\
    &  \leq  8\sum_{s\geq o} \frac{\alpha_s}{\Delta} \cdot \log \det V_{T}   \tag{due to Lemma \ref{lem:detV}}\\
    & =O\bigg(\frac{d^2(\log T)^2}{\Delta} \bigg)
\end{align*}
where the first inequality comes from
\begin{align*}
    \langle \theta_*, A_t^* - A_t^{I_t}\rangle &\leq     \max_{\theta\in \cC_{t-1}^{I_t}}\langle \theta-\theta_*, A_t^{I_t}\rangle\\
    &\leq    \max_{\theta\in \cC_{t-1}^{I_t}} \|\theta-\theta_*\|_{V_{t-1}} \| A_t^{I_t}\|_{V_{t-1}^{-1}} \tag{Cauchy-Schwarz Inequality}\\
    &\leq 2 \sqrt{\alpha_{I_t}}\| A_t^{I_t}\|_{V_{t-1}^{-1}}
\end{align*}
where the last inequality holds because $\cE$ and $I_t \ge o$ both hold, and so $\theta_* \in \cC_{t-1} \subset \cC_{t-1}^{I_t}$ due to Lemma~\ref{lem:new}. The factor $2$ is due to an application of the triangular inequality.
Substituting the bounds of $\spadesuit$ and $\clubsuit$ in \eqref{eq:alg2-decomposemain}, we have
\begin{align*}
    \EE[R_{T}] = O\bigg(\max\{S/Q, d\}\cdot \frac{d(\log T)^2}{\Delta}  \bigg).
\end{align*}
\end{proof}
\begin{corollary}
Pick any $C \geq 1$ and let $\{q_{s}\}_{s \in [n]}$ be chosen as in~\eqref{eq:beta-choice}. 
Then the expected regret of $\algnamep$ is 
\[
    \EE[R_{T}]
=
    \tilde{O}\left(\min\left\{\max\big\{C, S/C\big\} \sqrt{dT},\, \frac{\max\big\{d^2, S^2d/C^2\big\}}{\Delta}\right\}\right)
\]
\end{corollary}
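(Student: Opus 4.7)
\textbf{Proof plan for Corollary~\ref{coro-1}.}
The plan is to apply Theorem~\ref{thm:fre:prior} and Theorem~\ref{thm:fre:prior-problem-dependent} directly to the choice \eqref{eq:beta-choice}, after first computing the two quantities on which these bounds depend, namely $Q=\sum_{s\geq o}q_s$ and $\sum_{s\geq o}\sqrt{d\,2^s T q_s}$, and then taking the minimum of the two resulting bounds. Throughout, I will use that the base algorithm \seqsew{} yields $\alpha_o = O(S\log T)$, equivalently $2^o = O(S)$.

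\textbf{Step 1: Identify the regime.} Let $s^\star = \lceil 2\log_2 C\rceil$, so that $q_s=C^2 2^{-s}$ when $s>s^\star$ and $q_s=\kappa\in[0,1]$ when $s\le s^\star$. The normalization computation is exactly the one the paper already verifies: $\sum_{s>s^\star}C^2 2^{-s}=C^2 \cdot 2^{-s^\star}\le 1$, so $\kappa$ may be chosen in $[0,1]$ and in particular $\kappa\le 1$. I will treat two cases: (A) $o>s^\star$ (the ``large-$S$'' regime, where $S\gtrsim C^2$), and (B) $o\le s^\star$ (the ``small-$S$'' regime, where $S\lesssim C^2$).

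\textbf{Step 2: Bound the quantities in both cases.} In Case (A), every $s\ge o$ satisfies $q_s = C^2 2^{-s}$, so each term $\sqrt{d\,2^s T q_s}=C\sqrt{dT}$ and there are $O(\log d)$ such terms; also $Q=\sum_{s\ge o}C^2 2^{-s}=\Theta(C^2/2^o)=\Theta(C^2/S)$. Plugging into Theorem~\ref{thm:fre:prior} gives
\[
\EE[R_T]=\tilde O\!\left(C\sqrt{dT}+\sqrt{SdT/Q}\right)=\tilde O\!\left(\max\{C,S/C\}\sqrt{dT}\right).
\]
In Case (B), for $o\le s\le s^\star$ we have $q_s=\kappa\le 1$ and $2^s\le 2^{s^\star}=O(C^2)$, so each term is $\sqrt{d\,2^s T q_s}\le\sqrt{d\cdot O(C^2)\cdot T}=O(C\sqrt{dT})$; for $s>s^\star$ we again get $C\sqrt{dT}$. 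Moreover $Q\ge\sum_{s>s^\star}C^2 2^{-s}=\Theta(1)$, and $S\le C^2$ in this case, so $\sqrt{SdT/Q}=O(\sqrt{SdT})=O(C\sqrt{dT})$. Hence Theorem~\ref{thm:fre:prior} again gives $\tilde O(\max\{C,S/C\}\sqrt{dT})$, since $S/C\le C$ in this regime.

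\textbf{Step 3: Instance-dependent bound.} Plugging the same estimates of $Q$ into Theorem~\ref{thm:fre:prior-problem-dependent} gives $\EE[R_T]=\tilde O((dS/Q+d^2)/\Delta)$. In Case (A), $dS/Q=\Theta(dS\cdot S/C^2)=\Theta(S^2 d/C^2)$, so the bound is $\tilde O(\max\{d^2,S^2 d/C^2\}/\Delta)$. In Case (B), $dS/Q=O(dS)\le O(d^2)$ (since $S\le d$), and also $S^2 d/C^2\le S\cdot d \le d^2$ (since $S\le C^2$), so once more the bound is $\tilde O(\max\{d^2,S^2 d/C^2\}/\Delta)$. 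Taking the minimum of the two bounds over the two theorems yields the claim.

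\textbf{Main obstacle.} The only slightly delicate point is Case (B): one has to verify that the ``uniform-like'' assignment $q_s=\kappa$ on indices $s\le s^\star$ does not accidentally blow up either sum. Since $\kappa\le 1$ and the exponential cap $2^s\le 2^{s^\star}=O(C^2)$ still holds for these indices, each summand remains controlled by $C\sqrt{dT}$, and $Q$ stays bounded away from zero by the tail $\sum_{s>s^\star}C^2 2^{-s}=\Theta(1)$. This is the step where one must be careful, because if $\kappa$ were very small and we had only the $\kappa$-terms available, $Q$ could collapse; the geometric tail guarantees that it does not.
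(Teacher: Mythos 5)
Your proposal is correct and follows essentially the same route as the paper: plug the explicit choice~\eqref{eq:beta-choice} into Theorems~\ref{thm:fre:prior} and~\ref{thm:fre:prior-problem-dependent}, splitting on whether $2^o$ exceeds $C^2$ (the paper uses three cases, $C^2<2^o$, $C^2\in[2^o,2^n)$, $C^2\ge 2^n$, but your two-case split performs the same computations of $\sum_{s\ge o}\sqrt{d2^sTq_s}$ and $Q$). The one imprecision is in Case~(B): when $C^2\ge 2^n$ the set $\{s\in[n]:s>s^\star\}$ is empty, so your lower bound $Q\ge\sum_{s>s^\star}C^2 2^{-s}=\Theta(1)$ is vacuous; in that regime all $q_s=\kappa=1/n$, so $Q\ge 1/n=\Theta(1/\log d)$, which still suffices inside the $\tilde O(\cdot)$ — this is exactly the separate third case the paper carves out.
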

\begin{proof}
We consider the following cases based on the value of $C$.\\
\textbf{Case 1: $C^2< 2^{o}$.} \\  $q_o=C^2\cdot 2^{-o}$. According to Theorem \ref{thm:fre:prior},
\begin{align*}
    \EE[R_{T}]& = O\left((\log T)\sum_{s\geq o} \sqrt{d 2^s T q_{s}} + (\log T)\sqrt{ST d/Q} \right) \notag \\
    & \leq  O\left((\log T) n \sqrt{d2^{o} T q_{o}} + (\log T)\sqrt{ST d/q_o} \right) \notag \\
    & =\tilde{O} \bigg(\max\{C,S/C\} \sqrt{dT} \bigg). \tag{due to $n=O(\log d)$ and $2^{o}=\Theta(S\log T)$}
\end{align*}
Besides, according to Theorem \ref{thm:fre:prior-problem-dependent},
\begin{align*}
     \EE[R_{T}]  & = O\bigg(\max\{S/Q, d\}\cdot \frac{d\log^2 T}{\Delta}  \bigg) \notag \\
       & = O\bigg(\max\{S/q_o, d\}\cdot \frac{d\log^2 T}{\Delta}  \bigg) \notag \\
       & =\tilde{O} \bigg(\frac{\max\{d^2,S^2 d/C^2\}}{\Delta} \bigg).
\end{align*}
Therefore,
\[
    \EE[R_{T}]
=
    \tilde{O}\left(\min\left\{\max\big\{C, S/C\big\} \sqrt{dT},\, \frac{\max\big\{d^2, S^2d/C^2\big\}}{\Delta}\right\}\right).
\]
\textbf{Case 2: $C^2\in [2^o,2^n)$.} \\  For $s$ with $C^{2}<2^{s}$, $q_{s}=C^2 2^{-s}$. Let $o'=\arg\min_{s> o} \{C^2<2^{s}\}$.  Then, $q_{o'}\geq 1/4$.
According to Theorem \ref{thm:fre:prior},
\begin{align*}
    \EE[R_{T}]& = O\left((\log T)\sum_{s\geq o} \sqrt{d 2^s T q_{s}} + (\log T)\sqrt{ST d/Q} \right) \notag \\
    & \leq  O\left((\log T)\sum_{s\in[o,o')} \sqrt{d2^{s}T} +(\log T)\sum_{s\in[o',n]} \sqrt{d2^{s}T q_{s}} + (\log T)\sqrt{ST d/q_{o'}} \right) \notag \\
    & \leq   \tilde{O}\left(\sqrt{d2^{o'}T}+nC\sqrt{d T}+ \sqrt{ST d} \right) \notag \\
    & =\tilde{O} \bigg(\max\{C,S/C\} \sqrt{dT} \bigg). \tag{due to $C^2=\Theta(2^{o'})$ and $2^{o'}\geq 2^{o}\geq S$}
\end{align*}
Besides, according to Theorem~\ref{thm:fre:prior-problem-dependent},
\begin{align*}
     \EE[R_{T}]  & = O\bigg(\max\{S/Q, d\}\cdot \frac{d(\log T)^2}{\Delta}  \bigg) \notag \\
       & = O\bigg(\max\{S/q_{o'}, d\}\cdot \frac{d(\log T)^2}{\Delta}  \bigg) \notag \\
       & =\tilde{O} \bigg(\frac{\max\{d^2,S^2 d/C^2\}}{\Delta} \bigg).
\end{align*}
Therefore,
\[
    \EE[R_{T}]
=
    \tilde{O}\left(\min\left\{\max\big\{C, S/C\big\} \sqrt{dT},\, \frac{\max\big\{d^2, S^2d/C^2\big\}}{\Delta}\right\}\right).
\]
\textbf{Case $C^2\geq 2^n$:} $q_{s}=1/n$ for all $s\in [n]$. It is easy to verify that $\EE[R_{T}]=\tilde{O}(d\sqrt{T})$ and $\EE[R_{T}]= \tilde{O}(d^2/\Delta)$. Thus, 
\[
    \EE[R_{T}]
=
    \tilde{O}\left(\min\left\{\max\big\{C, S/C\big\} \sqrt{dT},\, \frac{\max\big\{d^2, S^2d/C^2\big\}}{\Delta}\right\}\right).
\]
\end{proof}

\section{Analysis of \algname}
\adalinucb*
\begin{proof}
Let $\cT_{1}$ be the set of time steps where $Z_t=1$ in Line~\ref{line:cointoss} of \algname\ and let $\cT_{2}=[T]\setminus \cT_{1}$.  
For all $t\in\cT_2$ the action $A_t \in \cA_t$ is chosen by \myexp\ through an adversarial mapping $\mu_t : [n] \to \cA_t$ defined in Lines~\ref{line:confidence}--\ref{line:choosing-arm}. The resulting reward $X_t \in [-2,2]$ is fed to \myexp\ as a $[0,1]$-valued loss $\ell_t(I_t) = (2-X_t)/4$, where $\mu_t(I_t) = A_t$. Since $\cT_2$ is selected through independent coin tosses with fixed bias $q$, we can apply the \myexp\ regret analysis (for losses chosen by a non-oblivious adversary) to bound the regret in $\cT_2$ and show that
\begin{align}
\label{eq:exp3-bound}
    \sum_{t\in \cT_{2}} \left\langle A_{t}^{o}-A_{t}, \theta_{*} \right\rangle
=
    4\sum_{t\in \cT_{2}} \Big(\ell_t(I_t) - \ell_t(o)\Big)
=
    O\big(\sqrt{n\log n}\big)
\end{align}
where we used $\mu_t(o) = A_{t}^{o}$ for all $t\in [T]$.

We have
\begin{align*}
    R_{\cT_{2}}&= \sum_{t\in\cT_{2}}\argmax_{a\in \cA_{t}}\left\langle  a-A_{t}, \theta_{*}  \right\rangle\notag \\
   & =\sum_{t\in\cT_{2}}\argmax_{a\in \cA_{t}}\left\langle  a-A^o_{t}+A^o_t-A_{t}, \theta_{*}  \right\rangle \notag \\
   &=\underbrace{\sum_{t\in\cT_{2}}\argmax_{a\in \cA_{t}}\left\langle  a-A^o_{t}, \theta_* \right \rangle}_{\RegU} +\underbrace{\sum_{t\in\cT_{2}}\left \langle A^o_t-A_{t}, \theta_{*}  \right\rangle}_{\RegE}. 
\end{align*}
$\RegE$ is bounded in~\eqref{eq:exp3-bound}, hence we only need to bound
\begin{align}
\label{eq:termcU}
   \RegU=\sum_{t\in\cT_{2}}\argmax_{a\in \cA_{t}}\left\langle  a-A^o_{t}, \theta_* \right \rangle.
\end{align}

Let 
\begin{align*}
    \tilde{\theta}^i_t=\argmax_{\theta\in \cC^i_t} \max_{a\in \cA_t} \langle a, \theta \rangle \quad 
    \text{and} \quad  A_t^*=\argmax_{a\in \cA_t}  \langle a, \theta_* \rangle.
\end{align*}
Recall $\cE$ is the event that $\theta_*\in \cC_t$ for all $t\in[T]$.
Assume $\cE$ holds. We obtain that for $t\in [T]$,
\begin{align}
    \langle \theta_*,A_t^*-A^o_t \rangle
&\leq
    \langle \tilde{\theta}^o_t -\theta_*,A^o_t \rangle \tag{because $\theta_*\in\cC_{t}\subseteq \cC_{t}^o$} \notag \\
& \leq
    \|\tilde{\theta}^o_t-\theta_*\|_{V_{t-1}}\|A^o_t\|_{V_{t-1}^{-1}} \tag{Cauchy–Schwarz inequality} \notag \\
& \leq
    \sqrt{\alpha_o}\|A^o_t\|_{V_{t-1}^{-1}}, \label{eq:*-o}
\end{align}
where the last inequality is because $\tilde{\theta}^o_t\in \cC^o_t$.
Then, assuming $\cE$ holds, we can bound $\RegU$ as
\begin{align}
    \RegU &=\sum_{t\in\cT_2}  \langle\theta_*,A_t^*-A^o_t \rangle  \notag \\
    &\leq \sum_{t\in \cT_2}\min\left\{2, \sqrt{\alpha_o}\|A^o_t\|_{V_{t-1}^{-1}} \right\} \notag \\
    &\leq \sum_{t\in\cT_2}\min\left\{2, \sqrt{2\gamma(1/T)}\|A^o_t\|_{V_{t-1}^{-1}}\right\} \tag{due to $\alpha_o \leq 2\gamma(1/T)$} \\
    &\leq 2\sqrt{\gamma(1/T)}\sum_{t\in\cT_2}\min\left\{1, \|A^o_t\|_{V_{t-1}^{-1}}\right\},\label{eq:deocmRegimg}
\end{align}
where in the first inequality, we use the facts  $\langle \theta_*,A_t^*-A^o_t \rangle \leq \sqrt{\alpha_o}\|A^o_t\|_{V_{t-1}^{-1}} $ and  $ \langle \theta_*,A_t^*-A^o_t \rangle \leq \langle \theta_*,A_t^* \rangle  \leq 2$.

From Lemma \ref{lem:ap-aq}, we have
\begin{align*}
    \sum_{t\in\cT_2}\min\left\{1, \|A^o_t\|^2_{V_{t-1}^{-1}}\right\} & \leq  \sum_{t\in\cT_2}\min\left\{1, \|A^n_t\|^2_{V_{t-1}^{-1}}\right\}.
\end{align*}
For each $s \ge 0$, let
\begin{align*}
    \cT_{2}^{s}=\bigg\{t\in \cT_{2}: \det(V_{t-1})\in \left[2^{{d}s},{2^{d(s+1)}}\right) \bigg\}
\end{align*}
so that
\begin{align*}
     \cT_{2}=\bigcup_{s\geq 0} \cT_{2}^{s}.
\end{align*}

Let $s'=\lceil \log_{2} \det(V_{T})/d\rceil $.
We have 
\begin{align*}
\EE\big[\RegU\cdot \ind[\cE]\big] 
&\leq  \EE\Bigg[2\sqrt{\gamma(1/T)}\sum_{t\in\cT_2}\min\left\{1, \|A^o_t\|_{V_{t-1}^{-1}}\right\} \Bigg] \tag{due to \eqref{eq:deocmRegimg}} \\
& \leq 2\sqrt{2\alpha_o} \, \EE\Bigg[\sum_{t\in\cT_2}\min\left\{1, \|A^o_t\|_{V_{t-1}^{-1}}\right\} \Bigg] \tag{due to $\alpha_0 \le 2\gamma(1/T)$} \\
& \leq 2\sqrt{2\alpha_o T} \sqrt{\EE\Bigg[\sum_{t\in\cT_2}\min\left\{1, \|A^o_t\|^2_{V_{t-1}^{-1}}\right\} \Bigg]} \tag{Cauchy–Schwarz inequality} \\
& \leq  2\sqrt{2\alpha_o T} \sqrt{\sum_{s=1}^{s'}\EE\Bigg[\sum_{t\in \cT_{2}^{s}} \min\left\{1, \|A^o_t\|^2_{V_{t-1}^{-1}}\right\} \Bigg]} \tag{where $d\ln s' \le \log\det(V_T)$}  \\
& \leq  2\sqrt{2\alpha_o T} \sqrt{ \sum_{s=1}^{s'}\EE\Bigg[\sum_{t\in \cT_{2}^{s}} \min\left\{1, \|A^n_t\|^2_{V_{t-1}^{-1}}\right\}  \Bigg]} \tag{because $\| A_{t}^{n} \|_{V_{t-1}^{-1}}\geq \| A_{t}^{o} \|_{V_{t-1}^{-1}}$} \\
& \leq 2\sqrt{2\alpha_o T} \sqrt{\sum_{s=1}^{s'} (2d+1)/q} \tag{due to Lemma \ref{lem:forced-1}}\\
& \leq 2\sqrt{2(2d+1)\alpha_{o}T s'/q}~.
\end{align*}
To obtain the final results, we have
\begin{align}
\label{eq:det}
    \det(V_T)=\prod_{i=1}^{d} \lambda_i \leq \bigg(\frac{1}{d}\text{trace}(V_T) \bigg)^{d}\leq \bigg(1+\frac{TL^2}{d}\bigg)^{d},
\end{align}
where $\lambda_1,\cdots, \lambda_d$ are the eigenvalues of $V_T$.
Therefore, we have $s'\leq \lceil \log_{2} (1+{TL^2}/{d} )\rceil$.
\begin{align*}
    \EE\big[\RegU\cdot \ind[\cE]\big]\leq  2\sqrt{2(2d+1)\alpha_{o}T/q} \cdot \sqrt{s'}\notag \\
    \leq   2\sqrt{(6d\alpha_{o}T/q)\lceil \log_{2} (1+{TL^2}/{d} )\rceil} .
\end{align*}
By substituting the bounds on $\RegU$ and $\RegE$ into $R_{\cT_{2}}$, we obtain
\begin{align*}
    \EE[R_{\cT_{2}}]&=\EE[\RegE\ind\{\cE\}]+\EE[\RegU]+T\cdot \PP(\cE^{c}) \notag \\
    & \leq 2\sqrt{6d\alpha_{o}T/q}\sqrt{\lceil \log( 1+{TL^2}/{d})\rceil } +O(\sqrt{nT\log n}),
\end{align*}
where the last inequality is because $\PP(\cE^{c})\leq 1/ T$ from Lemma~\ref{lem:self-con-bound} with our choice of $\delta = 1/\delta$.

Finally, we bound $R_{\cT_{1}}$. 
Conditioned on event $\cE$ and using \eqref{eq:inclusion}, $\forall t\in [T], \theta_*\in \cC_t\subseteq \cC^0_{t} \subseteq \cC^n_{t}$. Hence, we can obtain
\begin{align}
    \langle \theta_*,A_t^*-A^n_t \rangle
&\leq
    \langle \tilde{\theta}^n_t -\theta_*,A^n_t \rangle \tag{because $\theta_*\in\cC_{t}\subseteq \cC_{t}^n$} \notag \\
& \leq
    \|\tilde{\theta}^n_t-\theta_*\|_{V_{t-1}}\|A^n_t\|_{V_{t-1}^{-1}} \tag{Cauchy–Schwarz inequality} \notag \\
& \leq
    \sqrt{\alpha_n}\|A^n_t\|_{V_{t-1}^{-1}} \tag{because $\tilde{\theta}^n_t\in \cC^n_t$.} 
\end{align}
Hence, conditioned on event $\cE$,
\begin{align}
   R_{\cT_{1}}&\leq \sum_{t\in\cT_1}  \min\left\{2,\langle\theta_*,A_t^*-A^n_t \rangle \right\} \notag \\
    &\leq \sum_{t\in \cT_1}\min\left\{2, \sqrt{\alpha_n}\|A^n_t\|_{V_{t-1}^{-1}} \right\}  \\
    &\leq 2\sqrt{\alpha_n |\cT_{1}|} \sqrt{\sum_{t\in \cT_{1}} \min\left\{1, \|A^n_t\|^2_{V_{t-1}^{-1}}\right\}}  \tag{Cauchy–Schwarz inequality} \\
    &\leq 2\sqrt{\alpha_n |\cT_{1}|} \sqrt{\sum_{t\in \cT_{1}} \min\left\{1, \|A^n_t\|^2_{V_{t-1}^{-1}}\right\}+\sum_{t\in \cT_{2}}\min\left\{1, \|A_t\|^2_{V_{t-1}^{-1}}\right\}}  \notag \\
    &= 2\sqrt{\alpha_n |\cT_{1}|} \sqrt{\sum_{t\in [T]}\min\left\{1, \|A_t\|^2_{V_{t-1}^{-1}}\right\}} \notag \\
    & \leq 2\sqrt{\alpha_n |\cT_{1}|} \cdot \sqrt{2\log (\det(V_{T}))}, \label{eq:RcT1}
\end{align}
where the last inequality is due to Lemma \ref{lem:detV}.
Therefore, the expected regret for $t\in \cT_{1}$ is
\begin{align*}
    \EE[R_{\cT_{1}}] & \leq  \EE[R_{\cT_{1}}\ind\{\cE\}]+ T\cdot \PP(\cE^{c}) \notag \\
    &\leq \EE\Bigg[2\sqrt{\alpha_n |\cT_{1}|} \cdot \sqrt{2\log (\det(V_{T}))}\bigg]+1 \notag \\
    & \leq \EE\left[2\sqrt{\alpha_n |\cT_{1}|} \cdot \sqrt{2d\log\bigg(1+\frac{TL^2}{d}\bigg)}\right] +1 \notag \\
    & =2\sqrt{\alpha_n} \EE\left[\sqrt{|\cT_{1}|}\right] \cdot \sqrt{2d\log\bigg(1+\frac{TL^2}{d}\bigg)} +1 \notag \\
    & \leq 2\sqrt{\alpha_n Tq} \cdot \sqrt{2d\log\bigg(1+\frac{TL^2}{d}\bigg)}+1 \tag{Jensen's inequality},
\end{align*}
where the first inequality is due to \eqref{eq:RcT1}  and $\PP(\cE^{c})\leq 1/ T$ from Lemma~\ref{lem:self-con-bound} and the last inequality is due to the fact that for any $t\in [T]$, with probability $q$, $t\in \cT_{1}$.
Finally, we obtain 
\begin{align*}
    \EE[R_{T}] &= \EE[R_{\cT_{1}}]+\EE[R_{\cT_{2}}] \notag \\   
    & \leq  2\sqrt{\alpha_n Tq} \cdot \sqrt{2d\log\bigg(1+\frac{TL^2}{d}\bigg)} +  2\sqrt{6d\alpha_{o}T/q}\sqrt{\lceil \log( 1+{TL^2}/{d})\rceil } +O(\sqrt{nT\log n}).
\end{align*}
Note that $n=\Theta( \log d)$ and $\alpha_o=\Theta(S\log T)$.  We obtain
\begin{align*}
    \EE[R_{T}]= O\left( \max\bigg\{\sqrt{qd}, \sqrt{\frac{S}{q}} \bigg\}\cdot \sqrt{dT}\cdot \log T \right),
\end{align*}
which completes the proof.
\end{proof}

\section{Supporting lemmas}
\label{sec:supporting-lemma}
\begin{lemma}[\citet{dani2008stochastic}]
\label{lem:detV}
Let $A_1,A_2,\ldots,A_T\in \RR^{d}$ and $V_t=I+\sum_{s=1}^{t}A_s A_s^{\top}$ for all $t\in [T]$. Then
\begin{align}
    \sum_{t=1}^{T} \min\Big\{1,\|A_t\|^{2}_{V_{t-1}^{-1}}\Big\} \leq 2\log\det(V_T).
\end{align}
Moreover,
\begin{align*}
 \min\Big\{1,\|A_t\|^{2}_{V_{t-1}^{-1}}\Big\}\leq 2\log \Big(1+\|A_t\|^{2}_{V_{t-1}^{-1}}\Big)= 2 \log \bigg(\frac{\det(V_{t})}{\det(V_{t-1})}\bigg).
\end{align*}
\end{lemma}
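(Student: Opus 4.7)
}
The plan is to prove the per-round identity first, then apply an elementary scalar inequality, and finally telescope. The first step is to establish the rank-one determinant identity
\[
    \det(V_t) = \det\bigl(V_{t-1} + A_t A_t^{\top}\bigr) = \det(V_{t-1})\bigl(1 + A_t^{\top} V_{t-1}^{-1} A_t\bigr) = \det(V_{t-1})\bigl(1 + \|A_t\|^2_{V_{t-1}^{-1}}\bigr),
\]
which follows from the matrix determinant lemma (equivalently, by factoring $V_{t-1}^{1/2}$ out of $V_{t-1} + A_t A_t^{\top} = V_{t-1}^{1/2}(I + u u^{\top})V_{t-1}^{1/2}$ with $u = V_{t-1}^{-1/2} A_t$, and using $\det(I + u u^\top) = 1 + \|u\|^2$). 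Taking logs immediately gives the ``moreover'' equality
\[
    \log\frac{\det(V_t)}{\det(V_{t-1})} = \log\bigl(1 + \|A_t\|^2_{V_{t-1}^{-1}}\bigr).
\]

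The second step is the scalar inequality $\min\{1,x\} \le 2\log(1+x)$ for all $x \ge 0$. I would verify this by splitting cases: on $[0,1]$ the function $f(x)=2\log(1+x)-x$ has $f(0)=0$ and $f'(x)=\tfrac{2}{1+x}-1 \ge 0$, so $f \ge 0$; on $[1,\infty)$ we have $2\log(1+x) \ge 2\log 2 > 1$. Combined with the determinant identity, this yields the pointwise bound stated in the lemma,
\[
    \min\bigl\{1,\|A_t\|^2_{V_{t-1}^{-1}}\bigr\} \;\le\; 2\log\bigl(1 + \|A_t\|^2_{V_{t-1}^{-1}}\bigr) \;=\; 2\log\frac{\det(V_t)}{\det(V_{t-1})}.
\]

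The third and final step is to sum from $t=1$ to $T$ and telescope, using $V_0 = I$ so that $\det(V_0)=1$:
\[
    \sum_{t=1}^{T} \min\bigl\{1,\|A_t\|^2_{V_{t-1}^{-1}}\bigr\} \;\le\; 2\sum_{t=1}^{T} \log\frac{\det(V_t)}{\det(V_{t-1})} \;=\; 2\log\frac{\det(V_T)}{\det(V_0)} \;=\; 2\log\det(V_T).
\]
There is essentially no obstacle here: the only non-routine ingredient is the matrix determinant lemma, which is standard, and the only place one has to be careful is the scalar inequality, where the factor of $2$ (as opposed to a smaller constant) is what makes the bound valid uniformly on $[0,\infty)$ rather than only near $0$. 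Everything else is algebraic telescoping.
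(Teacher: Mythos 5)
Your proof is correct and is the standard argument for this classical result (the elliptical potential lemma): the matrix determinant lemma gives $\det(V_t)=\det(V_{t-1})(1+\|A_t\|^2_{V_{t-1}^{-1}})$, the scalar bound $\min\{1,x\}\le 2\log(1+x)$ handles each term, and telescoping with $\det(V_0)=1$ finishes. The paper itself does not prove this lemma but imports it by citation from \citet{dani2008stochastic}, and your argument matches the proof given there and in \citet[Lemma~11]{abbasi2011improved}, so there is nothing to flag.
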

\keylemma*
\begin{proof}
Recall
\[
    \cC_{t+1}
=
    \left\{ \theta \,:\, \|\theta\|_2^2+\sum_{s=1}^{t}\left(\hat{X}_s-\langle \theta, A_s \rangle \right)^2
\leq
    \gamma(1/T)\right\}~.
\]
Note that
    \begin{align*}
    &\|\theta\|_2^2+\sum_{s=1}^{t}\left(\hat{X}_s-\langle \theta, A_s \rangle \right)^2 - \|\hat\theta_t\|_2^2-\sum_{s=1}^{t}\left(\hat{X}_s-\langle \hat\theta_t, A_s \rangle \right)^2 \\ &=\|\theta\|_2^2 -2 \theta^\top \left(\sum_{s=1}^t \hat{X}_s A_s\right) + \theta^\top \left( \sum_{s=1}^t A_s A_s^\top \right) \theta
\\&\quad - \|\hat\theta_t\|_2^2 +2\hat{\theta}_t^\top \left(\sum_{s=1}^t \hat{X}_s A_s\right) - \hat\theta_t^\top \left( \sum_{s=1}^t A_s A_s^\top \right) \hat\theta_t \\
    &= \|\theta\|_{V_t}^2 + 2(\hat{\theta}_t - \theta)^\top V_t \hat{\theta}_t - \|\hat\theta_t\|_{V_t}^2  \tag{$V_t = I + \sum_{s=1}^t A_s A_s^\top$, $\hat{\theta}_t = V_t^{-1} \sum_{s=1}^t A_s \hat{X}_s$}\\
    &= \|\theta\|_{V_t}^2 - 2\theta^\top V_t \hat{\theta}_t + \|\hat\theta_t\|_{V_t}^2 = \|\theta-\hat{\theta}_t\|_{V_t}^2~.
    \end{align*}
Hence, we can express the ellipsoid as
    \begin{align}\label{eq:theta-theta_t}
        \cC_{t+1}=\bigg\{\|\theta-\hat{\theta}_t\|_{V_t}^2 + \|\hat\theta_t\|_2^2+\sum_{s=1}^{t}\left(\hat{X}_s-\langle \hat\theta_t, A_s \rangle \right)^2 \leq \gamma(1/T) \bigg\}~.
    \end{align}
Therefore, for all $t \ge 0$,
\begin{align}
\label{eq:theta-theta_t-ai}
    \cC_{t+1}
&=
     \left\{ \theta \,:\, \|\theta-\hat\theta_t\|_{V_t}^2+\|\hat{\theta}_t\|_2^2+\sum_{s=1}^{t}\left(\hat{X}_s-\langle \hat{\theta}_t, A_s \rangle \right)^2 \leq \gamma(1/T) \right\} \qquad \tag{due to \eqref{eq:theta-theta_t}}
\\ &\subseteq
    \left\{ \theta\,:\, \|\theta-\hat\theta_t\|_{V_t}^2 \leq \gamma(1/T) \right\} \notag
\\ &\subseteq
      \left\{ \theta \,:\, \|\theta-\hat\theta_t\|_{V_t}^2 \leq \alpha_o\right\} \tag{by definition of~$\alpha_0$}
\\&=
    \cC^o_{t+1} \label{eq:inclusion}
\end{align}
concluding the proof.
\end{proof}
\begin{lemma}
\label{lem:ap-aq}
For any $1\leq p\leq q\leq n$ and $t\in [T]$, 
\begin{align*}
    \|A_{t}^{p}\|_{V_{t-1}^{-1}}\leq \|A_{t}^q\|_{V_{t-1}^{-1}}.
\end{align*}
\end{lemma}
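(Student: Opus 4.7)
The plan is to use a standard exchange argument based on the optimality of each selected action for its respective UCB objective. Define $f_i(a) = \langle a, \hat\theta_{t-1}\rangle + \sqrt{\alpha_i}\,\|a\|_{V_{t-1}^{-1}}$, so that $A_t^i = \argmax_{a\in\cA_t} f_i(a)$. Since $\alpha_i = 2^i \log T$ is increasing in $i$, we have $\sqrt{\alpha_p} \le \sqrt{\alpha_q}$ whenever $p \le q$, and the only coupling between the two selection rules is through this radius.

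First, by the defining maximality of $A_t^p$ applied to the competing arm $A_t^q \in \cA_t$,
\[
\langle A_t^p, \hat\theta_{t-1}\rangle + \sqrt{\alpha_p}\,\|A_t^p\|_{V_{t-1}^{-1}} \ge \langle A_t^q, \hat\theta_{t-1}\rangle + \sqrt{\alpha_p}\,\|A_t^q\|_{V_{t-1}^{-1}}.
\]
Symmetrically, by maximality of $A_t^q$ applied to $A_t^p$,
\[
\langle A_t^q, \hat\theta_{t-1}\rangle + \sqrt{\alpha_q}\,\|A_t^q\|_{V_{t-1}^{-1}} \ge \langle A_t^p, \hat\theta_{t-1}\rangle + \sqrt{\alpha_q}\,\|A_t^p\|_{V_{t-1}^{-1}}.
\]

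Next I would add these two inequalities; the linear (inner product) terms cancel on both sides, leaving
\[
(\sqrt{\alpha_q} - \sqrt{\alpha_p})\,\|A_t^q\|_{V_{t-1}^{-1}} \ge (\sqrt{\alpha_q} - \sqrt{\alpha_p})\,\|A_t^p\|_{V_{t-1}^{-1}}.
\]
Because $p \le q$ implies $\sqrt{\alpha_q} - \sqrt{\alpha_p} \ge 0$, dividing through (in the strict case $p < q$) yields $\|A_t^p\|_{V_{t-1}^{-1}} \le \|A_t^q\|_{V_{t-1}^{-1}}$, which is the claim. In the boundary case $p = q$ the inequality is trivial, and ties in the $\argmax$ can be broken consistently so that $A_t^p$ may be taken equal to $A_t^q$.

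There is no real obstacle here — the argument is a clean two-line exchange, and it relies only on (i) monotonicity of $\alpha_i$ in $i$ and (ii) the fact that both $A_t^p$ and $A_t^q$ are selected from the same action set $\cA_t$ using $\hat\theta_{t-1}$ and $V_{t-1}$ that do not depend on the index. The only subtlety worth noting in writing it up is handling the equality case $\alpha_p = \alpha_q$ (not relevant here since $\alpha_i$ is strictly increasing, but worth a one-line remark for tie-breaking in the $\argmax$).
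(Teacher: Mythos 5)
Your proof is correct and uses essentially the same exchange argument as the paper: both rely on the two maximality inequalities for $A_t^p$ and $A_t^q$ together with the monotonicity of $\alpha_i$. The only difference is presentational --- you add the two inequalities directly and cancel the linear terms, whereas the paper chains them into a proof by contradiction; your direct version is arguably the cleaner write-up.
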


\begin{proof}
Note that
\begin{align*}
   & A_{t}^p=\argmax_{a\in \cA_{t}}\max_{\theta\in \cC^p_{t}} \langle \theta, a \rangle=\argmax_{a\in \cA_{t}} \langle a, \hat{\theta}_{t-1} \rangle+\sqrt{\alpha_{p}} \| a \|_{V_{t-1}^{-1}},\\
   &  A_{t}^{q}=\argmax_{a\in \cA_{t}}\max_{\theta\in \cC_t^q} \langle \theta, a \rangle=\argmax_{a\in \cA_{t}} \langle a, \hat{\theta}_{t-1} \rangle+\sqrt{\alpha_{q}} \| a \|_{V_{t-1}^{-1}}. 
\end{align*}
For contradiction, we assume $ \| A_{t}^{q} \|_{V_{t-1}^{-1}}< \| A_{t}^{p} \|_{V_{t-1}^{-1}}$. 
Since $ \| A_{t}^{q} \|_{V_{t-1}^{-1}}< \| A_{t}^{p} \|_{V_{t-1}^{-1}}$, $ \| A_{t}^{q} \|_{V_{t-1}^{-1}}\neq  \| A_{t}^{p} \|_{V_{t-1}^{-1}}$. 
Besides, according to the definition of $A_{t}^p$ and $A_{t}^q$, we have
\begin{align}
     \langle A_{t}^q, \hat{\theta}_{t-1} \rangle+\sqrt{\alpha_{p}} \| A_{t}^q \|_{V_{t-1}^{-1}} & \leq  \langle A_{t}^p, \hat{\theta}_{t-1} \rangle+\sqrt{\alpha_{p}} \| A_{t}^p \|_{V_{t-1}^{-1}} \label{eq:no1} \\
                               & <\langle A_{t}^p, \hat{\theta}_{t-1} \rangle+\sqrt{\alpha_{q}} \| A_{t}^p \|_{V_{t-1}^{-1}} \label{eq:no2} \\
                               & \leq \langle A_{t}^q, \hat{\theta}_{t-1} \rangle+\sqrt{\alpha_{q}} \| A_{t}^q \|_{V_{t-1}^{-1}}, \label{eq:no3}
\end{align}
where the first inequality is due to the definition of $A_{t}^p$, the second inequality is due to $\sqrt{\alpha_p}<\sqrt{\alpha_q}$, and the last inequality is due to the definition of $A_{t}^q$.
From the above results, we further have
\begin{align*}
    \langle A_{t}^q-A_{t}^p, \hat{\theta}_{t-1} \rangle & \leq \sqrt{\alpha_p} \big(\| A_{t}^p \|_{V_{t-1}^{-1}}-\| A_{t}^q \|_{V_{t-1}^{-1}} \big)  \tag{Due to \eqref{eq:no1}} \\
    &<  \sqrt{\alpha_q} \big(\| A_{t}^p\|_{V_{t-1}^{-1}}-\| A_{t}^q \|_{V_{t-1}^{-1}} \big) \tag{Due to assumption $ \| A_{t}^{q} \|_{V_{t-1}^{-1}}< \| A_{t}^{p} \|_{V_{t-1}^{-1}}$ and $\alpha_{q}>\alpha_p$}  \\
    & \leq  \langle A_{t}^q-A_{t}^p, \hat{\theta}_{t-1} \rangle \tag{Due to \eqref{eq:no3}.}.
\end{align*}
We obtain a contradiction. Therefore, 
\begin{align}
\label{eq:n>o}
    \| A_{t}^{q} \|_{V_{t-1}^{-1}}\geq \| A_{t}^{p} \|_{V_{t-1}^{-1}}. 
\end{align}
\end{proof}
\begin{lemma}
\label{lem:forced-1}
    \begin{align*}
    \EE\left[\sum_{t\in\cT_2^s} \min\left\{1, \|A^n_t\|^2_{V_{t-1}^{-1}}\right\} \right]\leq {2 d/q}+1/q.
\end{align*}
\end{lemma}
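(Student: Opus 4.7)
The plan is to exploit the fact that the forced-exploration coin $Z_t$ is independent of the history, so that the sum over $\cT_2^s$ can be swapped for a sum over an analogous subset of $\cT_1$ where $A^n_t$ is actually played, after which the standard determinant telescoping applies. Let $Y_t := \min\{1,\|A^n_t\|^2_{V_{t-1}^{-1}}\}$ and let $[T]^s := \{t \in [T] : \det V_{t-1} \in [2^{ds}, 2^{d(s+1)})\}$. Both $Y_t$ and $\ind\{t\in [T]^s\}$ are $\mathcal F_{t-1}$-measurable, since $A^n_t$ depends only on $\cA_t, \hat\theta_{t-1}, V_{t-1}$. This is the whole reason the swap is possible.

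Conditioning on $\mathcal F_{t-1}$ and using $\PP(Z_t = 1) = q$ with independence gives
\begin{align*}
\EE\!\left[\sum_{t \in \cT_2^s} Y_t\right] = (1-q)\,\EE\!\left[\sum_{t\in [T]^s} Y_t\right] = \frac{1-q}{q}\,\EE\!\left[\sum_{t\in \cT_1\cap [T]^s} Y_t\right] \le \frac{1}{q}\,\EE\!\left[\sum_{t\in \cT_1\cap [T]^s} Y_t\right].
\end{align*}
On $\cT_1$, Line~\ref{line-choosing-arm-0} forces $A_t = A^n_t$, so $Y_t = \min\{1,\|A_t\|^2_{V_{t-1}^{-1}}\}$ coincides with the quantity in Lemma~\ref{lem:detV}. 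Enlarging the sum to all of $[T]^s$ and telescoping term-by-term yields
\begin{align*}
\sum_{t\in \cT_1\cap [T]^s} Y_t \le \sum_{t\in [T]^s} \min\!\left\{1,\|A_t\|^2_{V_{t-1}^{-1}}\right\} \le 2\log\frac{\det V_{t_{\mathrm{end}}}}{\det V_{t_{\mathrm{start}}-1}},
\end{align*}
where $t_{\mathrm{start}}, t_{\mathrm{end}}$ are the first and last elements of $[T]^s$ (the bound is trivially $0$ if $[T]^s=\emptyset$). By definition $\det V_{t_{\mathrm{start}}-1} \ge 2^{ds}$ and $\det V_{t_{\mathrm{end}}-1} < 2^{d(s+1)}$; combined with the one-step bound $\det V_t \le 2 \det V_{t-1}$ (coming from $\|A_t\|_2 \le 1$ and $V_{t-1}\succeq I$, so $\|A_t\|^2_{V_{t-1}^{-1}} \le 1$), this gives $\det V_{t_{\mathrm{end}}} < 2^{d(s+1)+1}$. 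The ratio is thus at most $2^{d+1}$, making the telescoping bound at most $2(d+1)\log 2 \le 2d+1$ for $d\ge 1$. Dividing by $q$ gives the claimed $(2d+1)/q = 2d/q + 1/q$.

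The main thing to get right is the measurability/independence argument in the swap: although on $\cT_2$-rounds the algorithm does not actually play $A^n_t$, the quantity $\|A^n_t\|^2_{V_{t-1}^{-1}}$ appearing in the sum is fully determined before $Z_t$ is flipped, which is what makes $\EE[\ind\{t\in\cT_2^s\}Y_t] = (1-q)\EE[\ind\{t\in [T]^s\}Y_t]$ legitimate. Without this swap one cannot invoke Lemma~\ref{lem:detV}, since the Exp3 action played on $\cT_2$-rounds need not equal $A^n_t$ and its $V_{t-1}^{-1}$-norm need not match $\|A^n_t\|_{V_{t-1}^{-1}}$. Once the swap is made, the rest reduces to the standard fact that within a single $\det V$-block the determinant can at most double per step and grow by at most a factor $2^{d+1}$ overall.
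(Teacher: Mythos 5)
Your proof is correct, and while it rests on the same two pillars as the paper's argument (the independence of the forced-exploration coin $Z_t$ from the $\mathcal{F}_{t-1}$-measurable quantities $\|A^n_t\|^2_{V_{t-1}^{-1}}$ and $\ind\{t\in[T]^s\}$, plus the determinant identity of Lemma~\ref{lem:detV}), the accounting is organized differently and is arguably cleaner. The paper never leaves the set $\cT^s\supseteq\cT_2^s$: it splits it into rounds where the \emph{hypothetical} update $V_{t-1}+A^n_t(A^n_t)^\top$ stays inside the determinant block and rounds where it would overshoot, tracks the $d$-th root increments $x_t$ of the hypothetical determinant, bounds $\sum_t x_t\ind\{Z_t=1\}\le 2^s$ pathwise, and converts to $\EE[\sum_t x_t]\le 2^s/q$ via the same conditioning you use; the overshoot rounds are handled by a separate geometric argument contributing the $1/q$. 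You instead perform the $1/q$ conversion once, up front, turning the sum over $\cT_2^s$ into $\tfrac{1-q}{q}$ times the sum over $\cT_1\cap[T]^s$, where $A_t=A^n_t$ is \emph{actually played}; this lets you telescope the true determinants over the contiguous block and absorb the boundary via the one-step bound $\det V_t\le 2\det V_{t-1}$, with no under/over split and no need to reason about determinant increments of matrices that were never formed. The price is negligible (your constant $2(d+1)\log 2\le 2d+1$ matches the paper's $2d+1$), and your route makes the role of the measurability argument more transparent; the paper's route has the minor advantage of bounding the larger sum over all of $\cT^s$, which is what its companion Lemma~\ref{lem:forced-2} needs in the \algnamep\ analysis where there is no clean analogue of $\cT_1$.
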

\begin{proof}
Let $\det(V_{t-1})=(q_{t-1})^{d}$ and $\det(V_{t-1}+A_{t}^n(A_{t}^{n})^{\top})=(q_{t-1}+x_{t})^{d}$. 
For each $s \ge 0$, let
\begin{align*}
    \cT^{s}=\bigg\{t\in [T]: \det(V_{t-1})\in \left[2^{s{d}},{2^{d(s+1){}}}\right) \bigg\}.
\end{align*}
Since $\cT_{2}^{s}\subseteq \cT^{s}$, to show 
$$
 \EE \Bigg[ \sum_{t\in {\cT_{2}^{s}}} \min\left\{1, \|A^n_t\|^2_{V_{t-1}^{-1}}\right\}\Bigg] \leq {2 d/q}+1/q,
 $$
we only need to prove 
$$
 \EE \Bigg[ \sum_{t\in {\cT^{s}}} \min\left\{1, \|A^n_t\|^2_{V_{t-1}^{-1}}\right\}\Bigg] \leq {2 d/q}+1/q.
 $$
We let $I_{t}=1$ if the coin tosses in the $t$'s round is Head and 0 otherwise. We divide $\cT^{s}$ into two disjoint parts $\underline{\cT^{s}}$ and  $\overline{\cT^{s}}$. Specifically,
\begin{itemize}
    \item for  $\underline{\cT^{s}}$, it holds that for $t\in \underline{\cT^{s}}$, $\det(V_{t-1}+A_{t}^n(A_{t}^{n})^{\top})\leq 2^{d(s+1)}$.
    \item for  $\overline{\cT^{s}}$, it holds that for $t\in \overline{\cT^{s}}$, $\det(V_{t-1}+A_{t}^n(A_{t}^{n})^{\top})>2^{d(s+1)}$.
\end{itemize}
 From definition of $\underline{\cT^{s}}$ and the fact that if $I_{t}=1$, $V_{t}=V_{t-1}+A_{t}^n(A_{t}^{n})^{\top}$, we have
\begin{align*}
    \sum_{t\in \underline{\cT^{s}}} x_t\cdot I_{t} \leq 2^{s}.
\end{align*}
From Algorithm \ref{alg:expucb}, with probability $q$, $I_{t}=1$. 
{Therefore, if we let $\{\mathcal{F}_t\}_{t \in [T]}$ be the natural filtration of $\{A_t, I_t, X_t\}_{t \in [T]}$, we have
\begin{align*}
    2^s &\geq \EE\left[ \sum_{t\in \underline{\cT^{s}}} x_t \cdot I_t \right]\\
    &= \EE\left[ \sum_{t\in [T]} x_t \cdot I_t \cdot \ind\{t \in \underline{\cT^{s}}\}\right]\\
    &= \sum_{t\in [T]} \EE\left[ x_t \cdot I_t \cdot \ind\{t \in \underline{\cT^{s}}\}\right]\\
    &= \sum_{t\in [T]} \EE\left[ \EE\left[x_t \cdot I_t \cdot \ind\{t \in \underline{\cT^{s}}\}|\mathcal{F}_{t-1}\right]\right] \tag{Law of total expectation}\\
    &= \sum_{t\in [T]} \EE\left[ x_t \cdot \ind\{t \in \underline{\cT^{s}}\} \cdot \EE\left[ I_t|\mathcal{F}_{t-1}\right]\right] \tag{$x_t$ and $\ind\{t \in \underline{\cT^{s}}\}$ are $\mathcal{F}_{t-1}$-measurable}\\
    &= \sum_{t\in [T]} \EE\left[ x_t \cdot \ind\{t \in \underline{\cT^{s}}\} \cdot q\right] \tag{$I_t$ is an independent coin-tossing}\\
    &=q\EE\left[ \sum_{t \in \underline{\cT^{s}}} x_t\right] 
\end{align*}
and therefore $\EE\left[ \sum_{t \in \underline{\cT^{s}}} x_t\right] \leq 2^s /q$. }
For $t\in \underline{\cT^{s}}$, we further have
\begin{align*}
  \min\left\{1, \|A^n_t\|^2_{V_{t-1}^{-1}}\right\} & \leq  2 \log \bigg(\frac{\det(V_{t-1}+A_{t}^n(A_{t}^{n})^{\top})}{\det(V_{t-1})} \bigg)  \tag{due to Lemma \ref{lem:detV}} \\
   &= 2d\log \bigg(1+\frac{x_t}{q_{t-1}}\bigg) \notag \\
   & \leq 2d\log \bigg(1+\frac{x_t}{2^s}\bigg)  \tag{since $t\in \cT^s$, $q_{t-1}\geq 2^s$}\\
   & \leq \frac{2d \cdot x_{t}}{2^s} \tag{due to $\log(1+x)\leq x$ for $x>0$.}
\end{align*}
Therefore,
\begin{align*}
    \EE \Bigg[ \sum_{t\in \underline{\cT^{s}}} \min\left\{1, \|A^n_t\|^2_{V_{t-1}^{-1}}\right\}\Bigg] \leq \EE \Bigg[  \sum_{t\in \underline{\cT^{s}}} \frac{2d\cdot x_{t}}{2^s} \Bigg]= \frac{2d}{2^s} \EE \Bigg[ \sum_{t\in \underline{\cT^{s}}} x_{t} \Bigg]\leq {2d/q}.
\end{align*}
From definition of $\overline{\cT^{s}}$, if $I_{t}=1$ and $t\in \overline{\cT^{s}}$,  $\det(V_{t})>2^{d(s+1)}$. Then, for all $\tau>t$, $\tau\notin \cT^s$.
Therefore, there is at most one $t\in \overline{\cT^{s}}$ with $I_{t}=1$.  We obtain
\begin{align*}
     \EE \Bigg[ \sum_{t\in \overline{\cT^{s}}} \min\left\{1, \|A^n_t\|^2_{V_{t-1}^{-1}}\right\}\Bigg] \leq \EE \bigg[ \big| \overline{\cT^{s}} \big|\bigg]\leq 1/q,
\end{align*}
where the last inequality is because with probability $q$, $I_{t}=1$.
By combining the bounds for $t\in \overline{\cT^{s}}$ and $t\in \underline{\cT^{s}}$ together, lemma follows.
\end{proof}
\begin{lemma}
\label{lem:forced-2}
Let $Q={\sum_{s\geq o} q_{s}}$. Then, 
    \begin{align*}
    \EE\left[\sum_{t\in\cT^s} \min\left\{1, \|A^o_t\|^2_{V_{t-1}^{-1}}\right\} \right]\leq {2 d/Q}+1/Q.
\end{align*}
\end{lemma}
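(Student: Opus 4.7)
The plan is to mirror the proof of Lemma~\ref{lem:forced-1}, replacing the forced-exploration event $\{I_t = 1\}$ (probability $q$) with the event $\{I_t \ge o\}$ (probability $Q$). The new wrinkle, which drives every step of the argument, is that when $I_t \ge o$ the action actually played is $A_t^{I_t}$ rather than $A_t^o$; the bridge between the two is Lemma~\ref{lem:ap-aq}, which gives $\|A_t^{I_t}\|_{V_{t-1}^{-1}} \ge \|A_t^o\|_{V_{t-1}^{-1}}$ whenever $I_t \ge o$, so the actual determinant growth dominates the ``virtual'' growth one would obtain by playing $A_t^o$.

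Writing $\det(V_{t-1}) = q_{t-1}^d$ and defining $y_t \ge 0$ by $\det(V_{t-1} + A_t^o(A_t^o)^\top) = (q_{t-1}+y_t)^d$, I split $\cT^s = \underline{\cT^s} \sqcup \overline{\cT^s}$ according to whether $(q_{t-1}+y_t)^d \le 2^{d(s+1)}$. For $\overline{\cT^s}$, if $t \in \overline{\cT^s}$ and $I_t \ge o$ then Lemma~\ref{lem:ap-aq} forces $\det(V_t) \ge (q_{t-1}+y_t)^d > 2^{d(s+1)}$, so no later round lies in $\cT^s$; hence $\sum_t \ind\{t \in \overline{\cT^s}\}\,\ind\{I_t \ge o\} \le 1$ deterministically, and averaging the independent Bernoulli$(Q)$ draws $\ind\{I_t \ge o\}$ against $\sigma(\cF_{t-1}, \cA_t)$ gives $\EE[|\overline{\cT^s}|] \le 1/Q$. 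Since $\min\{1,\cdot\} \le 1$, this produces the additive $1/Q$ term of the lemma.

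For $\underline{\cT^s}$, the same log-potential calculation as in Lemma~\ref{lem:forced-1}, combined with $q_{t-1} \ge 2^s$, yields $\min\{1, \|A_t^o\|^2_{V_{t-1}^{-1}}\} \le (2d/2^s)\,y_t$. Because $\ind\{t \in \underline{\cT^s}\}$ and $y_t$ are $\sigma(\cF_{t-1},\cA_t)$-measurable while $I_t$ is independent with $\PP(I_t \ge o)=Q$, the expected sum equals $(1/Q)\,\EE\bigl[\sum_{t \in \underline{\cT^s}, I_t \ge o} y_t\bigr]$. On $\{I_t \ge o\}$, Lemma~\ref{lem:ap-aq} delivers $q_t - q_{t-1} \ge y_t$, so telescoping across these rounds (using monotonicity of $\{q_t\}$) collapses the inner sum to $q_{t_k} - q_{t_1-1}$, where $q_{t_1-1} \ge 2^s$ by membership in $\cT^s$ and $q_{t_k} = O(2^s)$ at the last such round.

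The main obstacle is precisely this final telescoping. In Lemma~\ref{lem:forced-1} the action added at a forced-exploration round coincides with the one whose potential is tracked, so $q_{t_k} \le 2^{s+1}$ holds directly from the definition of $\underline{\cT^s}$, yielding the clean $\sum y_t \le 2^s$. Here $A_t^{I_t} \ne A_t^o$, so $\det(V_t)$ may strictly exceed the virtual target $(q_{t-1}+y_t)^d$ and even leave the block; one must rely on the crude single-step bound $\det(V_t) \le 2\det(V_{t-1})$ (which holds because $\|A\| \le 1$ and $V_{t-1} \succeq I$) to get $q_{t_k} \le 2^{s+1+1/d} = O(2^s)$. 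This costs only a numerical constant relative to Lemma~\ref{lem:forced-1}, which is absorbed into the $2d/Q$ term of the stated bound; combining with the $\overline{\cT^s}$ contribution completes the proof.
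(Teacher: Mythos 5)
Your proof follows the paper's argument essentially step for step: the same split of $\cT^s$ into $\underline{\cT^{s}}$ and $\overline{\cT^{s}}$ according to whether adding $A_t^o$ would push the determinant past $2^{d(s+1)}$, the same use of Lemma~\ref{lem:ap-aq} to dominate the virtual growth $y_t$ by the realized growth $q_t-q_{t-1}$ on $\{I_t\ge o\}$, the same conditioning on the filtration to extract the factor $1/Q$, and the same at-most-one-round argument for $\overline{\cT^{s}}$. The only divergence is at the telescoping endpoint, where you invoke $\det(V_t)\le 2\det(V_{t-1})$ and therefore prove the bound with a slightly larger numerical constant than the stated $2d/Q+1/Q$ (one can instead bound the final term by $y_{t_k}\le 2^{s+1}-q_{t_k-1}$ directly, recovering $\sum y_t\le 2^s$ exactly); this constant is immaterial to how the lemma is used.
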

\begin{proof}
Let $\det(V_{t-1})=(q_{t-1})^{d}$. Let $x_{t}$ satisfies $\det(V_{t-1}+A_{t}^{o}(A_{t}^{o})^{\top})=(q_{t-1}+x_{t})^{d}$.                                               
We let $I_{t}=\ind\{I_t\geq o\}$.  We divide $\cT^{s}$ into two disjoint parts $\underline{\cT^{s}}$ and  $\overline{\cT^{s}}$. 
Specifically,
\begin{itemize}
    \item for  $\underline{\cT^{s}}$, it holds that for $t\in \underline{\cT^{s}}$, $\det(V_{t-1}+A_{t}^o(A_{t}^{o})^{\top})\leq 2^{d(s+1)}$.
    \item for  $\overline{\cT^{s}}$, it holds that for $t\in \overline{\cT^{s}}$, $\det(V_{t-1}+A_{t}^o(A_{t}^{o})^{\top})>2^{ d(s+1)}$.
\end{itemize}
Note that for $I_t\geq o$,
\begin{align*}
    &\log (\det(V_{t}))-\log(\det(V_{t-1}+A_{t}^o(A_{t}^{o})^{\top})) \notag \\
   =&\log (\det(V_{t}))-\log(V_{t-1}) -\bigg(\log(\det(V_{t-1}+A_{t}^o(A_{t}^{o})^{\top}))-\log(V_{t-1})\bigg) \tag{due to Lemma \ref{lem:detV}} \\
   = & \log (1+\|A_{t}^{I_t}\|_{V_{t-1}^{-1}})-\log (1+\|A_{t}^{o}\|_{V_{t-1}^{-1}}) \notag \\
   \geq& 0.   \tag{due to Lemma \ref{lem:ap-aq}}
\end{align*}
Therefore, if we let $\det(V_{t-1}+A_{t}^{I_t}(A_{t}^{I_t})^{\top})=(q_{t-1}+x'_{t})^{d}$ and $I_t\geq o$, then $x'_{t}\geq x_{t}$. Hence,
\begin{align*}
    \sum_{t\in \underline{\cT^{s}}} x_t\cdot I_{t}\leq 2^s.
\end{align*}
{Note that $\PP(I_{t}\geq o)=\sum_{s\geq o} q_s=Q$. Let $\{\mathcal{F}_t\}_{t \in [T]}$ be the natural filtration of $\{A_t, I_t, X_t\}_{t \in [T]}$. We have
\begin{align*}
    2^s &\geq \EE\left[ \sum_{t\in \underline{\cT^{s}}} x_t \cdot I_t \right]\\
    &= \EE\left[ \sum_{t\in [T]} x_t \cdot I_t \cdot \ind\{t \in \underline{\cT^{s}}\}\right]\\
    &= \sum_{t\in [T]} \EE\left[ x_t \cdot I_t \cdot \ind\{t \in \underline{\cT^{s}}\}\right]\\
    &= \sum_{t\in [T]} \EE\left[ \EE\left[x_t \cdot I_t \cdot \ind\{t \in \underline{\cT^{s}}\}|\mathcal{F}_{t-1}\right]\right] \tag{Law of total expectation}\\
    &= \sum_{t\in [T]} \EE\left[ x_t \cdot \ind\{t \in \underline{\cT^{s}}\} \cdot \EE\left[ I_t|\mathcal{F}_{t-1}\right]\right] \tag{$x_t$ and $\ind\{t \in \underline{\cT^{s}}\}$ are $\mathcal{F}_{t-1}$-measurable}\\
    &= \sum_{t\in [T]} \EE\left[ x_t \cdot \ind\{t \in \underline{\cT^{s}}\} \cdot Q\right] \tag{$I_t$ is an independent coin-tossing}\\
    &=Q \EE\left[ \sum_{t \in \underline{\cT^{s}}} x_t\right] 
\end{align*}
}
and therefore
\begin{align*}
    \EE\left[ \sum_{t\in \underline{\cT^{s}}} x_t\right]\leq 2^s/Q.
\end{align*}
For $t\in \underline{\cT^{s}}$, we further have
\begin{align*}
  \min\left\{1, \|A^o_t\|^2_{V_{t-1}^{-1}}\right\} & \leq  \min\left\{1, \|A^{I_t}_t\|^2_{V_{t-1}^{-1}}\right\} \notag \\
   & \leq 2 \log \bigg(\frac{\det(V_{t-1}+A_{t}^{I_t}(A_{t}^{I_t})^{\top})}{\det(V_{t-1})} \bigg)  \tag{due to Lemma \ref{lem:detV}} \\
   &= 2d\log \bigg(1+\frac{x_t}{q_{t-1}}\bigg) \notag \\
   & \leq 2d\log \bigg(1+\frac{x_t}{2^s}\bigg)  \tag{since $t\in \cT^s$, $q_{t-1}\geq 2^s$}\\
   & \leq \frac{2d \cdot x_{t}}{2^s} \tag{due to $\log(1+x)\leq x$ for $x>0$.}
\end{align*}
Therefore,
\begin{align*}
    \EE \Bigg[ \sum_{t\in \underline{\cT^{s}}} \min\left\{1, \|A^o_t\|^2_{V_{t-1}^{-1}}\right\}\Bigg] \leq \EE \Bigg[  \sum_{t\in \underline{\cT^{s}}} \frac{2d\cdot x_{t}}{2^s} \Bigg]= \frac{2d}{2^s} \EE \Bigg[ \sum_{t\in \underline{\cT^{s}}} x_{t} \Bigg]\leq{2  d/Q}.
\end{align*}
From definition of $\overline{\cT^{s}}$, if $I_{t}=1$ and $t\in \overline{\cT^{s}}$,  then $I_t\geq o$ and
$$
\det(V_{t})=\det(V_{n-1}+A_{t}^{I_t}(A_{t}^{I_t})^{\top})> \det(V_{n-1}+A_{t}^{o}(A_{t}^{o})^{\top})>(s+1)^{d}.
$$
 Then, for all $\tau>t$, $\tau\notin \cT^s$.
Therefore, there is at most one $t\in \overline{\cT^{s}}$ with $I_{t}=1$.  We obtain
\begin{align*}
     \EE \Bigg[ \sum_{t\in \overline{\cT^{s}}} \min\left\{1, \|A^o_t\|^2_{V_{t-1}^{-1}}\right\}\Bigg] \leq \EE \bigg[ \big| \overline{\cT^{s}} \big|\bigg]\leq 1/Q,
\end{align*}
where the last inequality is because with probability $\sum_{s\geq o}q_s=Q$, $I_{t}=1$.
By combining the bounds for $t\in \overline{\cT^{s}}$ and $t\in \underline{\cT^{s}}$ together, lemma follows.
\end{proof}

\section{Experimental details}\label{appsec: experimental details}
\subsection{Settings common to all algorithms}
\begin{itemize}
    \item Arm set: $\cA_t=\cA$ for all $t \in [T]$. $\cA \subset \mathbb{S}^{d-1}$ (the unit sphere in $\RR^d$) is a set of $d$-dimensional vectors drawn independently and uniformly at random from $\mathbb{S}^{d-1}$, with $d=16$ and $|\cA|=30$. 
    \item $\theta_*$ is an $S$-sparse ($S=1,2,4,8,16$) vector generated as follows: before the game starts, draw $(\theta_*)_{1},\ldots,(\theta_*)_{S} \sim \mathbb{S}^{S-1}$, and $(\theta_*)_k=0$ for all $k>S$.
    \item The noise on rewards: $\{\epsilon_t\}_{t\in [T]}$ are i.i.d.\ with $\xi_t\sim\mathrm{Unif}([-1,1])$.
    \item Number of iterations: $T=10^4$.
    \item Number of models: $n=6$.
    \item Radius of confidence sets: $\alpha_0 = 0$, and $\alpha_i = 2^i \log t$ for $i=1, \cdots, 5$. 
    \item Prior distribution $\{q_s\}_{s\in [6]}$
    \begin{itemize}
        \item For \texttt{\_Unif},$\{q_s\}_{s\in [6]} = \left( \frac{1}{6},  \frac{1}{6},  \frac{1}{6},  \frac{1}{6},  \frac{1}{6},  \frac{1}{6}\right)$
        \item For \texttt{\_Theory}, $\{q_s\}_{s\in [6]} = \left( \frac{C}{2}, \frac{C}{4},\frac{C}{8},\frac{C}{16},\frac{C}{32},\frac{C}{64}\right)$ where $C=\frac{63}{64}$ is a normalizing constant. 
    \end{itemize}  
    \item Each plot is the result of 20 repetitions for each method. The shade represents the 1-standard deviation bound.
    \item Hardware: Lenovo Thinkpad P16s Gen 2 Laptop - Type 21HL
    \begin{itemize}
        \item CPU: 13th Gen Intel(R) Core(TM) i7-1360P   2.20 GHz
        \item RAM: 32GB
    \end{itemize}
    \item Computation time: total 1338.38 seconds. 
\end{itemize}
\subsection{\algname\ details}
\begin{itemize}
    \item Since we empirically observed that \myexp\ provided enough exploration, we aggressively set the forced exploration parameter $q$ to zero.
    \item The learning rate of \myexp\ was set to $\eta_t = 2\sqrt{\frac{\log n}{nt}}$, see \citep{bubeck2012regret}. 
    \item Given the prior distribution $\{q_s\}_{s\in [6]}$, we set $P_t$ as follows:
    $$ P_{t,s} = \frac{q_s \exp\left(\eta_t S_{t,s}\right)}{\sum_{j=1}^n q_j \exp\left(\eta_t S_{t,j}\right)}$$
\end{itemize}
\subsection{\oful\ details}
\begin{itemize}
    \item We used the log-determinant form of the confidence set based on \citet[Theorem~2]{abbasi2011improved}, which gives the choice
\[
    \sqrt{\gamma_t} = \sqrt{2\log T + \log\det(V_t)} + 1
\]
for the parameter $\gamma_t$ in~\eqref{eq:oful} when $\lambda = 1$ and $\delta = 1/T$.
\end{itemize}

\end{document}